\pdfoutput=1
\documentclass{article}

\usepackage[preprint]{neurips_2026}

\usepackage[utf8]{inputenc} 
\usepackage[T1]{fontenc}    
\usepackage{hyperref}       
\usepackage{url}            
\usepackage{booktabs}       
\usepackage{amsfonts}       
\usepackage{nicefrac}       
\usepackage{microtype}      
\usepackage{xcolor}         

\usepackage{graphicx}
\usepackage{amsmath}
\usepackage{amssymb}
\usepackage{longtable}
\usepackage{amsthm}
\usepackage{algorithm}
\usepackage{algorithmic}
\usepackage{cleveref}
\usepackage{placeins}       
\usepackage{float}          
\usepackage{pdflscape}     
\makeatletter
\def\@seccntformat#1{\csname the#1\endcsname\quad}
\makeatother
\renewcommand{\theHfigure}{\thesection.\arabic{figure}}
\renewcommand{\theHtable}{\thesection.\arabic{table}}
\crefname{lemma}{Lemma}{Lemmas}
\Crefname{lemma}{Lemma}{Lemmas}

\theoremstyle{plain}
\newtheorem{theorem}{Theorem}[section]

\newtheorem{lemma}[theorem]{Lemma}

\theoremstyle{definition}
\newtheorem{definition}[theorem]{Definition}
\newtheorem{assumption}[theorem]{Assumption}

\title{The Immutable Past: Formalizing State Mutability and Conflict Resolution in Mutable RAG}

\author{%
  Hamed Haddadpajouh \\
  Independent Researcher
  \And
  Amir AmiriTabat \\
  Independent Researcher
}

\begin{document}

\maketitle

\begin{abstract}
Retrieval-Augmented Generation (RAG) serves as the primary memory architecture for long-horizon autonomous agents. However, treating shared memory as an append-only stream introduces \textit{Semantic Shadowing}, a critical failure mode where conflicting historical observations accumulate and statistically dominate valid recent updates. In dynamic environments, this results in severe state divergence as agents retrieve and act upon obsolete facts. This paper formalizes the mechanics of State Mutability to prove that standard dense retrieval suffers from Asymptotic Recall Decay. Furthermore, we formally demonstrate a Majority Vote Trap, revealing that increasing the retrieval context window paradoxically degrades generation accuracy by diluting the attention mechanism under conditions of semantic equivalence. To resolve this, we introduce GC-Mem (Garbage Collection for Memory), a strict inference-time consistency protocol. Unlike heuristic time-decay mechanisms---which indiscriminately destroy valid long-term memory---GC-Mem relies purely on a temporal dominance operator ($\Phi_{\mathcal{T}}$) paired with contradiction detection to surgically excise shadowed context. Evaluated across a rigorous, behaviorally inferred benchmark of 137,760 memory chunks and continuous accumulation sweeps, standard RAG and timestamp re-ranking baselines experience severe degradation. In contrast, GC-Mem empirically recovers $>90\%$ conflict resolution accuracy. We establish strict precision and recall deployment thresholds, ensuring state convergence where standard mutable RAG fundamentally fails.
\end{abstract}

\section{Introduction}
\label{sec:intro}

As autonomous systems graduate from isolated tasks to long-horizon workflows, the consistency of their memory architecture becomes paramount. Retrieval-Augmented Generation (RAG) is the consensus memory framework \cite{lewis2020rag}, yet it operates under the implicit assumption of a static knowledge base. In real-world agentic deployments, memory is an append-only stream where factual states mutate over time (e.g., user preferences shift, project goals update, infrastructure changes) \cite{park2023generative}. 

We identify a foundational failure in the append-only paradigm: when an entity updates, the historical state is not deleted. Because the historical and updated states are semantically equivalent but logically contradictory, dense retrieval algorithms return the statistical majority of stale chunks. We formalize this failure mode as \textbf{Semantic Shadowing}.

This paper establishes the theoretical bounds of retrieval failure under state mutation. We prove three central theorems: (1) valid-state recall decays asymptotically to zero as a system's interaction history grows; (2) generation fails even when valid chunks are successfully retrieved because attention mass becomes non-discriminating across semantically equivalent contradictions; and (3) multi-factor memory scoring fundamentally violates state convergence by protecting entrenched historical errors.

To address these vulnerabilities, we propose \textbf{GC-Mem}, a lightweight, inference-time garbage collection operator driven strictly by temporal dominance ($\Phi_{\mathcal{T}}$). Unlike naive timestamp re-ranking---which we demonstrate destroys valid historical reasoning---GC-Mem surgically removes logical contradictions from the retrieved context prior to generation.

\section{Related Work}
\label{sec:related}

\textbf{Agentic Memory and Mutable RAG.}
The deployment of long-horizon autonomous agents necessitates robust memory architectures. Current frameworks like Voyager \cite{wang2023voyager} and Reflexion \cite{shinn2023reflexion} utilize iterative self-reflection to update agent policies, but rely on localized episodic buffers. MemGPT \cite{packer2023memgpt} introduced an OS-inspired memory hierarchy to manage context limits, while GraphRAG \cite{edge2024graphrag} and HippoRAG \cite{gutierrez2024hipporag} map global semantic structures for multi-hop reasoning. However, these architectures primarily treat memory as a monotonically growing, static knowledge base. When subjected to state mutation, standard dense retrieval \cite{lewis2020rag, karpov2024rag} falters because heuristic time-decay algorithms indiscriminately erase valid historical invariants alongside stale facts. GC-Mem explicitly solves the mutability gap in RAG-driven agent architectures.

\textbf{Knowledge Editing and Temporal Contradiction.}
Our work bridges mutable RAG with the literature on knowledge editing and temporal QA. Methodologies like ROME \cite{meng2022locating} and MEMIT \cite{meng2022mass} directly mutate the parametric weights of LLMs to inject updated facts. Concurrently, TempLAMA \cite{dhingra2022time} and StreamingQA \cite{liska2022streamingqa} explore QA over temporally evolving corpora. While parametric editing is computationally prohibitive for continuous, high-frequency multi-agent streams, GC-Mem acts as a non-parametric, inference-time filter. By utilizing Natural Language Inference (NLI) contradiction detection, GC-Mem dynamically resolves temporal conflicts in the retrieved context without requiring weight updates.

\textbf{Long-Context Dilution and Attention Collapse.}
The Majority Vote Trap formalized in this paper builds upon the "Lost in the Middle" phenomenon \cite{liu2023lost, levy2024safecoding}, which demonstrated that LLM attention degrades when relevant information is flanked by noise. We extend this by proving that in mutable memory, context dilution is not an accidental byproduct of long documents, but a mathematical asymptote. As history grows, the probability of retrieving a correct state approaches zero, and the generation attention mass becomes captive to the statistical majority of stale evidence \cite{wang2025memory}.

\textbf{Latency and Self-Correction in LLMs.}
Recent advancements in agentic reasoning heavily leverage critic-guided reflection loops \cite{zhang2025ragcritic} to correct hallucinations and reasoning errors. However, relying on multi-turn generative reflection to prune memory contradictions induces severe latency overhead and token costs. GC-Mem operates as a computationally lightweight, feed-forward inference filter. By excising logical contradictions prior to the primary generation phase, GC-Mem circumvents the need for post-generation reflection loops, ensuring strict factual grounding within bounded operational latency.

\section{Theoretical Framework}
\label{sec:theory}

Let $\mathcal{M}_t = \{(c_1, \tau_1), \dots, (c_n, \tau_n)\}$ denote the shared memory stream at time $t$, where $c_i$ is a memory chunk and $\tau_i$ is its timestamp.

\subsection{Problem Formulation}
\begin{definition}[Mutable Entity State \& Multi-Transition Dynamics]
Consider an entity $E$. The ground truth state $\sigma_E(t)$ is formally defined as a piecewise-constant function over time. Let $\mathcal{T} = \{t_0, t_1, \dots, t_n\}$ denote a strictly monotonically increasing sequence of transition timestamps ($t_0 < t_1 < \dots < t_n$). For any interval $t \in [t_i, t_{i+1})$, the entity state is $S_i$. 

For a query issued at current time $t > t_n$, the terminal state $S_{new} = S_n$ is the sole valid state. All preceding states $S_{old} \in \{S_0, \dots, S_{n-1}\}$ where $S_{old} \perp S_{new}$ constitute logical contradictions. This formalizes chained state mutations (e.g., $A \rightarrow B \rightarrow C$). Because the temporal operator $\Phi_{\mathcal{T}}$ enforces $\tau_{new} > \tau_{old}$ pairwise across all retrieved chunks, it dynamically collapses chained contradictions to the terminal valid state $S_n$.
\end{definition}

\begin{lemma}[Semantic Equivalence]
\label{lem:equivalence}
For a query $q$ targeting the state of $E$, the embedding similarity of the old and new states is mathematically indistinguishable up to a marginal noise term $\epsilon$:
\begin{equation}
|sim(q, S_{old}) - sim(q, S_{new})| < \epsilon
\end{equation}
\end{lemma}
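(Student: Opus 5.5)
The plan is to obtain the bound from a decomposition of each chunk embedding into a shared \emph{topical} component and a small \emph{state-specific} residual, followed by Cauchy--Schwarz. The statement is not a consequence of the definitions alone. It needs a modeling hypothesis about the encoder, so I will make that hypothesis explicit and derive the inequality from it.

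Let $f$ be the (normalized) embedding map and $sim(u,v)=\langle f(u),f(v)\rangle$. I would posit that a chunk asserting ``$E$ has state $S_i$'' embeds as
\begin{equation}
f(S_i) = \alpha\, e_E + \beta\, r_i ,
\end{equation}
where $e_E$ is a unit vector encoding the entity, attribute and surface template shared by all state reports of $E$, and $r_i$ is a unit vector encoding the particular value $S_i$. Here $\beta \ll \alpha$ reflects that contrastive encoders weight topical overlap far more heavily than the specific value. For example, ``the server runs on port 8080'' and ``the server runs on port 9090'' differ in one token. Similarly, the query $q$ targets $E$ without naming any value, so I would write $f(q)=\gamma\, e_E + \delta\, w$ with $\|w\|\le 1$.

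The computation then goes as follows. First, expand
\begin{equation}
sim(q,S_{old})-sim(q,S_{new}) = \beta\,\langle f(q), r_{old}-r_{new}\rangle ,
\end{equation}
since the $e_E$ terms cancel exactly. Second, bound the right-hand side by Cauchy--Schwarz:
\begin{equation}
|sim(q,S_{old})-sim(q,S_{new})| \le \beta\,\|f(q)\|\,\|r_{old}-r_{new}\| \le 2\beta .
\end{equation}
Any $\epsilon > 2\beta$ therefore works. The bound sharpens to $\beta\delta\|r_{old}-r_{new}\|$ if $e_E$ is orthogonal to the residuals. Third, note that the chained case $A\to B\to C$ of the preceding definition follows by applying the bound to each pair $(S_i, S_n)$, with the same $\epsilon$.

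The main obstacle is justifying the decomposition and the smallness of $\beta$. This is an empirical property of the encoder, not a theorem. I would handle it by stating it as an explicit assumption, namely that the value-specific variance of embeddings is bounded by $\beta$. I would also point to the later benchmark for measured similarity gaps that support it. As a fallback, I would argue via Lipschitz continuity of $f$ under a token-edit metric: if $S_{old}$ and $S_{new}$ differ in $k$ tokens and $f$ is $L$-Lipschitz, then the difference is at most $Lk$. The condition $\epsilon$ ``marginal'' then corresponds to $Lk$ being small relative to the similarity gap between on-topic and off-topic chunks.
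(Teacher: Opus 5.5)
You take a genuinely different route from the paper, which gives no derivation of this lemma at all. It is stated without proof, supported only by the empirical histogram of similarity gaps in Figure~\ref{fig:epsilon_hist}, and later instantiated as $\epsilon<0.10$ in the proof of Theorem~\ref{thm:majority}. In effect the paper treats it as a measured property of its encoder and data, which is what you diagnose. You instead reduce it to an explicit structural hypothesis, $f(S_i)=\alpha e_E+\beta r_i$. Your computation from that hypothesis is correct: exact cancellation of the shared component, then Cauchy--Schwarz, then the pairwise extension to chains.

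Your route buys a mechanism and an interpretable constant. It also clarifies scope: the statement concerns the retrieval encoder $f$ only, so it does not by itself license the paper's later reuse of $\epsilon$ for the generator's attention logits. It does not reduce the empirical burden, because ``$\beta$ is small'' carries essentially the whole content of the lemma. Only a measurement like the paper's histogram can discharge it, and that measurement also supplies the concrete value the paper uses downstream.

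Two refinements. First, with $\|f(S_i)\|=1$ and common coefficients, the identity $\alpha^2+\beta^2+2\alpha\beta\langle e_E,r_i\rangle=1$ holding for both states forces $\langle e_E,r_{old}\rangle=\langle e_E,r_{new}\rangle$ when $\alpha\beta\neq 0$. So your sharper bound $\beta\delta\|r_{old}-r_{new}\|$ holds automatically, without a separate orthogonality assumption. Conversely, state-dependent coefficients break the exact cancellation and add a term $|\alpha_{old}-\alpha_{new}|\,|\langle f(q),e_E\rangle|$ that you would also have to control.

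Second, your one-token intuition and the token-edit Lipschitz fallback do not fit the regime the paper targets. Its behavioral benchmark deliberately makes old and new chunks lexically disjoint, with keyword leakage suppressed to 0--3\% (e.g.\ ``winded after a single flight of stairs'' versus ``jogged three miles''). The edit distance is then essentially a whole sentence, so $L$ times it is not small. In that regime the decomposition has to be justified by shared topical content, not surface overlap.
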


\begin{assumption}[Accumulation Hypothesis ($N \gg M$)]
\label{assump:accumulation}
Let $\eta(S)$ denote the count of chunks asserting state $S$ semantically similar to a specific attribute. For instance, if an agent interacts with a user 50 times regarding their diet, and the diet mutates once, $N=49$ and $M=1$. $N$ scales per-attribute, not globally. We assume $\eta(S_{old}) \gg \eta(S_{new})$.
\end{assumption}

\subsection{The Shadowing Failure}

\begin{theorem}[Asymptotic Recall Decay]
\label{thm:decay}
Assuming the Accumulation Hypothesis ($N \gg M$), the expected number of valid chunks $X$ in the retrieved set $\mathcal{R}_k$ is bounded by:
\begin{equation}
\mathbb{E}[X] \approx k \frac{M}{N+M}
\end{equation}
 As attribute history $N \to \infty$, $\mathbb{E}[X] \to 0$. Because dense retrievers exhibit inherent semantic bias toward dominant phrasings, this hypergeometric formulation serves as a theoretical worst-case lower bound for recall failure (Proof in Appendix B.1).
\end{theorem}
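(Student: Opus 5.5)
The plan is to model retrieval of the top-$k$ chunks for a query $q$ about entity $E$ as a sampling problem over the attribute-specific pool. That pool holds $N=\eta(S_{old})$ stale chunks and $M=\eta(S_{new})$ valid chunks, as in Assumption~\ref{assump:accumulation}. The first step is to use Lemma~\ref{lem:equivalence} to argue that the retriever cannot separate the two classes. Every chunk in the pool has similarity $sim(q,\cdot)$ within $\epsilon$ of every other. In the idealized limit $\epsilon \to 0$, or more concretely if the $\epsilon$-perturbations are treated as i.i.d.\ noise that does not depend on the state label, the ranking of the $N+M$ chunks is a uniformly random permutation. The top-$k$ set $\mathcal{R}_k$ is therefore a uniformly random $k$-subset of the pool, drawn without replacement.

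Under this exchangeability model, $X=|\mathcal{R}_k \cap \{\text{valid chunks}\}|$ follows a hypergeometric distribution with population $N+M$, $M$ successes, and $k$ draws. The expectation comes out by linearity alone, with no distributional computation. Write $X=\sum_{j} \mathbf{1}[\text{valid chunk } j \in \mathcal{R}_k]$. By symmetry each valid chunk lands in $\mathcal{R}_k$ with probability $k/(N+M)$, so $\mathbb{E}[X]=kM/(N+M)$. This holds exactly whenever $k \le N+M$. The ``$\approx$'' in the statement absorbs the $O(\epsilon)$ deviation from exact exchangeability. The limit is then immediate: for fixed $k$ and $M$, $kM/(N+M)\le kM/N \to 0$ as $N\to\infty$. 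A Markov bound also gives $\Pr[X\ge 1]\le kM/(N+M)\to 0$, which states recall failure in probability and is the stronger form I would include.

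The main obstacle is justifying the worst-case qualifier, namely that real dense retrievers do no better than this uniform baseline. My approach is to let the retriever carry a bias $\delta \ge 0$ in favor of stale phrasings. Because stale states are repeated $N$ times, they are the dominant phrasings, so I model each valid chunk's inclusion probability as $\le k/(N+M)$ instead of equal to it. Linearity then gives $\mathbb{E}[X]\le kM/(N+M)$, so the hypergeometric value upper-bounds expected valid recall. Equivalently, it lower-bounds the failure rate, which matches the statement's phrasing.

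The honest gap is that Lemma~\ref{lem:equivalence} bounds only pairwise similarity differences. It does not on its own imply exchangeability, or that the noise is independent of the label. I would therefore state this explicitly as a modeling assumption on the $\epsilon$-noise rather than derive it, and then carry out the rest of the argument rigorously.
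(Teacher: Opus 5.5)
Your proposal is essentially the paper's proof in Appendix~B.1, and it is correct under the assumptions you state. Both arguments model $\mathcal{R}_k$ as a uniform draw without replacement from the $N+M$ attribute chunks, obtain the hypergeometric mean $kM/(N+M)$, let $N\to\infty$, and then appeal to a bias toward the dominant stale phrasings to make that value an upper bound on $\mathbb{E}[X]$. Your linearity derivation, the explicit ``fixed $k$, $M$'' and label-independence assumptions, and the Markov bound $\Pr[X\ge 1]\le kM/(N+M)$ tighten the same route. In particular, the Markov bound supplies the justification for the paper's bare assertion that $P(X=0)\to 1$.

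One correction: drop the remark that the ``$\approx$'' absorbs an $O(\epsilon)$ deviation from exchangeability. Top-$k$ membership is discontinuous in the scores, so an arbitrarily small systematic label-dependent offset can push $\mathbb{E}[X]$ anywhere in $[0,\min(M,k)]$ (for $k\le N$). Under your i.i.d.\ label-independent noise assumption the formula is exact, and without that assumption $\epsilon$ gives no control over $\mathbb{E}[X]$.
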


\begin{theorem}[The Majority Vote Trap]
\label{thm:majority}
Even if $\mathcal{R}_k$ contains valid evidence (e.g., $k-1$ stale chunks and 1 valid chunk), generation collapses. Under Semantic Equivalence (Lemma~\ref{lem:equivalence}), the LLM attention mechanism becomes non-discriminating across the chunks \cite{liu2023lost}. The probability mass assigned to the stale state scales with its representation:
\begin{equation}
P(y = S_{old}) \propto \frac{k-1}{k}
\end{equation}
 (Proof in Appendix B.2).
\end{theorem}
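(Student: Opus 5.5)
The plan is to model the generator's readout as an attention-weighted mixture over the $k$ retrieved chunks and show that, under Lemma~\ref{lem:equivalence}, the weights become essentially uniform. I will fix a query $q$ and a retrieved set $\mathcal{R}_k = \{c_1,\dots,c_k\}$ in which $c_1,\dots,c_{k-1}$ assert $S_{old}$ and $c_k$ asserts $S_{new}$. The generator is idealized as a single attention layer: chunk $c_j$ receives weight
\begin{equation}
\alpha_j = \frac{\exp(s_j/\sqrt{d})}{\sum_{l=1}^{k}\exp(s_l/\sqrt{d})},
\end{equation}
where $s_j$ is the query--chunk score. The output distribution over states is then $P(y=S)=\sum_j \alpha_j\, P(y=S\mid c_j)$, with each chunk voting for the state it asserts, i.e.\ $P(y=S\mid c_j)\approx \mathbb{1}[c_j \text{ asserts } S]$. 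This is the modeling assumption behind the appeal to \cite{liu2023lost}, and I will state it explicitly.

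The central step is bounding the spread of the weights. I will assume the attention scores inherit the $\epsilon$-closeness of Lemma~\ref{lem:equivalence}, so that $|s_i - s_j| < \epsilon'$ with $\epsilon'$ proportional to $\epsilon$ (for instance via Lipschitzness of the score in the embedding similarity). The softmax then gives
\begin{equation}
e^{-\epsilon'/\sqrt{d}} \le \frac{\alpha_i}{\alpha_j} \le e^{\epsilon'/\sqrt{d}},
\end{equation}
which yields $\alpha_j \in \big[\tfrac{1}{k}e^{-\epsilon'/\sqrt{d}},\, \tfrac{1}{k}e^{\epsilon'/\sqrt{d}}\big]$. Summing over the $k-1$ stale chunks,
\begin{equation}
P(y=S_{old}) \;=\; \frac{k-1}{k}\,\big(1+O(\epsilon')\big),
\end{equation}
so $P(y=S_{old})\propto \frac{k-1}{k}$ up to a factor $e^{\pm 2\epsilon'/\sqrt{d}}$. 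Correspondingly, $P(y=S_{new}) \approx 1/k$. This already exceeds $1/2$ for every $k\ge 3$ and tends to $1$ as $k$ grows, which is the "collapse" claimed in the statement.

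To show that widening the context window hurts, I will combine this with Theorem~\ref{thm:decay}. In expectation only $k\frac{M}{N+M}$ of the retrieved chunks are valid, so the stale mass is roughly $\frac{N}{N+M}$. The "one valid chunk" case in the statement is therefore the optimistic scenario.

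The main obstacle is the transfer step: Lemma~\ref{lem:equivalence} constrains retrieval-embedding similarity, not the internal attention logits of the LLM, and real attention is multi-layer and position-dependent. I would first state the transfer as an explicit assumption, namely that the attention logits are an $L$-Lipschitz function of the similarities, so $\epsilon'=L\epsilon$. I would then note that positional effects of the kind reported in \cite{liu2023lost} only add a bounded multiplicative bias, and that such a bias cannot favor one chunk among $k-1$ interchangeable ones.
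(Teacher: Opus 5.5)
Your proposal is correct and is essentially the paper's own argument: Semantic Equivalence is carried over to near-equal attention logits, so the softmax gives $\alpha_i \approx 1/k$. Summing over the $k-1$ stale chunks then gives $P(y=S_{old}) \approx \frac{k-1}{k}$ and $P(y=S_{new}) \approx \frac{1}{k}$. You add rigor the paper omits: you state the logit-transfer step as an explicit Lipschitz assumption and replace $\approx$ with the two-sided bound $\alpha_j \in [\tfrac{1}{k}e^{-\epsilon'/\sqrt{d}}, \tfrac{1}{k}e^{\epsilon'/\sqrt{d}}]$. Your Theorem~\ref{thm:decay} aside is not needed for this statement, and since the expected stale mass $N/(N+M)$ it yields does not depend on $k$, it does not by itself show that widening the window hurts.
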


\begin{figure}[htbp]
\centering
\includegraphics[width=0.8\linewidth]{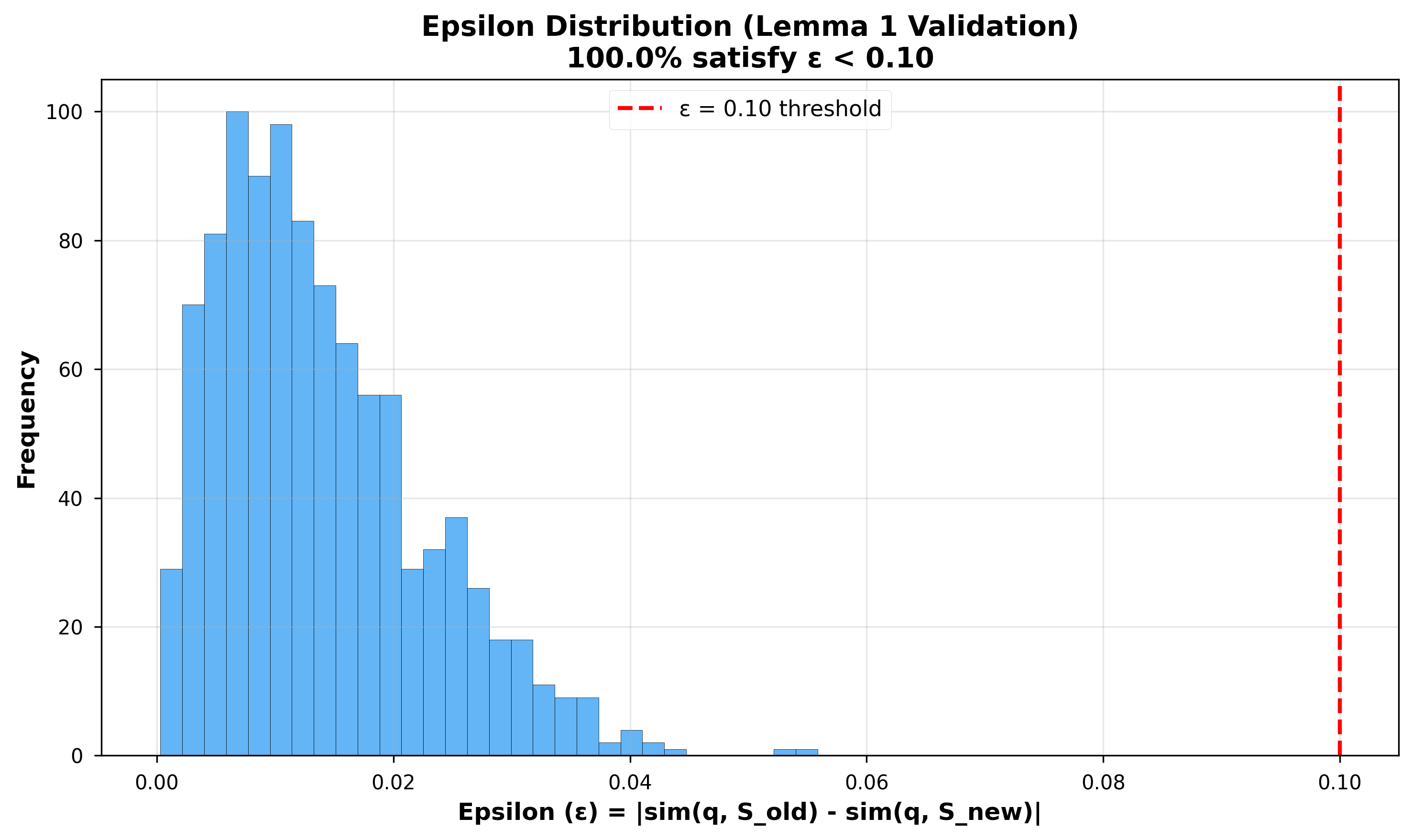}
\caption{Embedding similarity noise ($\epsilon$) distribution illustrating semantic equivalence between old and new states.}
\label{fig:epsilon_hist}
\end{figure}

\section{The GC-Mem Protocol}
\label{sec:protocol}

To mathematically guarantee state convergence, we introduce \textbf{GC-Mem}. Operationally, the system functions as a linear inference stack from ingested interaction logs to validated generation. We detail the architectural mechanics below.

\subsection{Memory Architecture and Candidate Retrieval}
Each multi-turn session is segmented and embedded via a fixed sentence encoder, populating the primary medium-term memory (MTM) vector index. Optionally, the architecture supports a Short-Term Memory (STM) recency buffer, implemented as a FIFO queue mirroring the $N$ most recent chunks. When a user query is issued, the system executes multi-query expansion to mitigate single-embedding bias. The MTM returns the top-$k$ semantically similar chunks. If the STM buffer is enabled, it is unioned with the MTM hits and deduplicated, forming the frozen candidate set $\mathcal{R}_k$. Crucially, $\mathcal{R}_k$ serves as the identical baseline context evaluated across all methodologies (Standard RAG, Timestamp Re-ranking, and GC-Mem) to ensure strict experimental parity.

\subsection{Conflict Oracle and Temporal Dominance}
Each item is simplified to a tuple $m_i = (c_i, \tau_i, r_i)$, where $\tau_i$ is the strict chronological timestamp and $r_i$ is retrieval relevance. 

We define the conflict function $\kappa(c_i, c_j) \in \{0,1\}$ which evaluates whether two retrieved statements logically contradict relative to the user query. This step utilizes a pairwise Natural Language Inference (NLI) heuristic operating strictly within $\mathcal{R}_k$, resulting in a bounded $O(|\mathcal{R}_k|^2)$ operational complexity. 

We define the temporal dominance operator $\Phi_{\mathcal{T}}$ over the retrieved set $\mathcal{R}_k$:
\begin{equation}
\Phi_{\mathcal{T}}(\mathcal{R}_k) = \{ c_i \in \mathcal{R}_k \mid \nexists c_j \in \mathcal{R}_k : \kappa(c_i, c_j)=1 \land \tau_j > \tau_i \}
\end{equation}

\begin{theorem}[State Recovery]
\label{thm:recovery}
Applying $\Phi_{\mathcal{T}}$ guarantees that if at least one valid chunk $S_{new}$ is retrieved, all older contradictory chunks $S_{old}$ are excised. The context is purged of contradiction, breaking the Majority Vote Trap and forcing $P(y=S_{new}) \to 1$. 
\end{theorem}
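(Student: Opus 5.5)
The argument has two parts. The first is a purely set-theoretic claim about $\Phi_{\mathcal{T}}$: the contradiction-free part. The second is a probabilistic claim about generation, which reuses the attention model from the proof of Theorem~\ref{thm:majority}. The set-theoretic part needs one explicit premise, namely that the conflict oracle is \emph{sound and complete} on pairs from $\mathcal{R}_k$. Formally, for $c_i,c_j\in\mathcal{R}_k$ we have $\kappa(c_i,c_j)=1$ if and only if the two chunks assert mutually contradictory states $S_a\perp S_b$ of $E$. Under the Definition of Mutable Entity State we also have $\tau$ consistent with the transition order, so a chunk asserting $S_i$ carries a timestamp in $[t_i,t_{i+1})$. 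This is where the oracle's precision and recall thresholds will later enter. The guarantee is therefore conditional on perfect recall of $\kappa$ for excision, and on perfect precision for retention.

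\textbf{Step 1 (excision).} Fix a retrieved valid chunk $c^\ast$ asserting $S_{new}=S_n$, so $\tau^\ast\ge t_n$. Let $c_i$ be any retrieved chunk asserting some $S_{old}\in\{S_0,\dots,S_{n-1}\}$ with $S_{old}\perp S_{new}$. Then $\tau_i<t_n\le\tau^\ast$, and completeness of $\kappa$ gives $\kappa(c_i,c^\ast)=1$. The witness $c_j=c^\ast$ therefore violates the defining condition, so $c_i\notin\Phi_{\mathcal{T}}(\mathcal{R}_k)$. For chained mutations $A\to B\to C$ no transitivity of $\kappa$ is required, because every stale chunk is dominated directly by the same $c^\ast$.

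\textbf{Step 2 (retention).} We show that $c^\ast$ itself survives. Any $c_j$ with $\tau_j>\tau^\ast\ge t_n$ also asserts $S_n$, or is unrelated to $E$. By soundness of $\kappa$, $\kappa(c^\ast,c_j)=0$, so $c^\ast\in\Phi_{\mathcal{T}}(\mathcal{R}_k)$. Combining Steps 1 and 2, the filtered context contains at least one chunk asserting $S_{new}$ and no chunk contradicting it. In other words, it is contradiction-free with respect to $E$.

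\textbf{Step 3 (generation).} Here I reuse the model from the proof of Theorem~\ref{thm:majority}. Under Lemma~\ref{lem:equivalence}, attention is non-discriminating across semantically equivalent chunks, so the probability of each answer is proportional to its share of the chunks asserting it. After filtering, let $m\ge1$ chunks assert $S_{new}$ and $0$ assert any $S_{old}$. The analogue of the bound $P(y=S_{old})\propto\frac{k-1}{k}$ then becomes a share of $\frac{0}{m}=0$ for stale states and $\frac{m}{m}=1$ for $S_{new}$. Hence $P(y=S_{new})\to1$, with the residual mass coming only from the model's intrinsic error or hallucination outside the context.

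\textbf{Main obstacle.} I expect the difficulty to be making ``$\to1$'' honest. The proportional-attention model is heuristic, and $\kappa$ is an NLI approximation. My plan is to state the result exactly under an ideal oracle. I would then add a robustness remark that degrades gracefully: if $\kappa$ has recall $\rho$ on stale--valid pairs, the expected surviving stale count is at most $(1-\rho)(k-1)$. Substituting into the Majority-Vote share gives $P(y=S_{new})\gtrsim \frac{1}{1+(1-\rho)(k-1)}$, which tends to $1$ as $\rho\to1$.
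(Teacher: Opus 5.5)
The paper never actually proves this theorem. Appendix~B contains proofs only of the recall-decay and Majority Vote Trap theorems. The only justification offered is the remark in the Definition of Mutable Entity State that $\Phi_{\mathcal{T}}$ ``enforces $\tau_{new}>\tau_{old}$ pairwise'' and so collapses chains to $S_n$. The conclusion $P(y=S_{new})\to 1$ is left as an implicit rerun of the attention-share model from the proof of Theorem~\ref{thm:majority}.

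Your Steps~1--3 formalize exactly that reasoning, and they are correct under the premises you state. What your version adds is that it exposes what ``guarantees'' silently needs:
\begin{itemize}
\item completeness of $\kappa$ on stale--valid pairs, for excision;
\item soundness of $\kappa$ on pairs involving $c^\ast$, for retention;
\item timestamps that respect the transition order.
\end{itemize}
These conditions matter: the paper's own Table~\ref{tab:precision} (\texttt{gpt-4.1-mini}: $0.7\%$ recall, $-0.2$\,pp) shows the unconditional reading fails. Your observation that excision is one-shot is also the right justification of the paper's chain remark. The witness $c_j$ ranges over $\mathcal{R}_k$, not over the filtered set, so every stale chunk is removed directly by $c^\ast$ and no transitivity of $\kappa$ is needed for $A\to B\to C$.

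A few things to tighten.

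\emph{Timestamp faithfulness.} This does not follow from the Definition, which only constrains the ground-truth function $\sigma_E$. State it as a separate hypothesis: a lagging or retrospective chunk written after $t_n$ that asserts a stale state would defeat both Step~1 and Step~2.

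\emph{Contradiction-freeness.} Steps~1--2 only remove chunks whose state is $\perp S_{new}$, so ``in other words, contradiction-free'' needs one more line. Any surviving pair asserting contradictory states with $\tau_a<\tau_b$ would have $\kappa(c_a,c_b)=1$, so $c_a$ would already be excised. Chunks with equal timestamps lie in the same interval and assert the same state.

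\emph{The count in Step~3.} The claim ``$0$ assert any $S_{old}$'' holds only if every earlier state contradicts $S_{new}$ (true in the benchmark), or if compatible earlier states are scored as correct. Likewise, your share $m/m$ tacitly normalizes over chunks asserting a state of $E$ and ignores surviving distractors; say so explicitly.

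\emph{The robustness bound.} The direction is right, since $\mathbb{E}[1/(1+s')]\ge 1/(1+\mathbb{E}[s'])$ by convexity. But it still presumes $c^\ast$ is retained. With precision below $1$, a later false-positive flag can excise every valid chunk, so you need an extra factor for the probability that at least one valid chunk survives.
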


\section{Experimental Methodology}

\subsection{Dataset Scale and Rationale}
We constructed a Temporal Mutation Benchmark consisting of 2,016 experiment instances and 137,760 evaluated memory chunks across continuous parameter sweeps (6 controlled $N:M$ accumulation ratios from 1:1 to 100:1, and 4 model architectures). 

We utilize a controlled synthetic environment because organic data cannot isolate the mathematical bounds of Semantic Shadowing. We evaluated external datasets (FactConsolidation and WikiContradict) and found they yielded 0\% and 10\% Conflict Resolution Accuracy (CR-Acc), respectively, due to the complete absence of controlled temporal contradiction ratios. 

\subsection{The Behavioral Pipeline}
To ensure robust evaluation, we utilize a behavioral generation pipeline to construct the memory contexts. Preliminary templates exhibited up to 14\% keyword leakage (where the state is explicitly named, e.g., "Status: employed"). The pipeline relies exclusively on behavior-inferrable state constraints (e.g., Old: "Complained about feeling winded after climbing a single flight of stairs." vs New: "Jogged three miles before breakfast this morning."). Regulated by a secondary GPT-4 quality rater, keyword leakage is suppressed to 0-3\%, forcing the retriever to rely purely on complex semantic similarity. The medium dataset contains 48 instances (1,464 chunks), and the large contains 65 instances (1,950 chunks).

\begin{figure}[htbp]
\centering
\includegraphics[width=\linewidth]{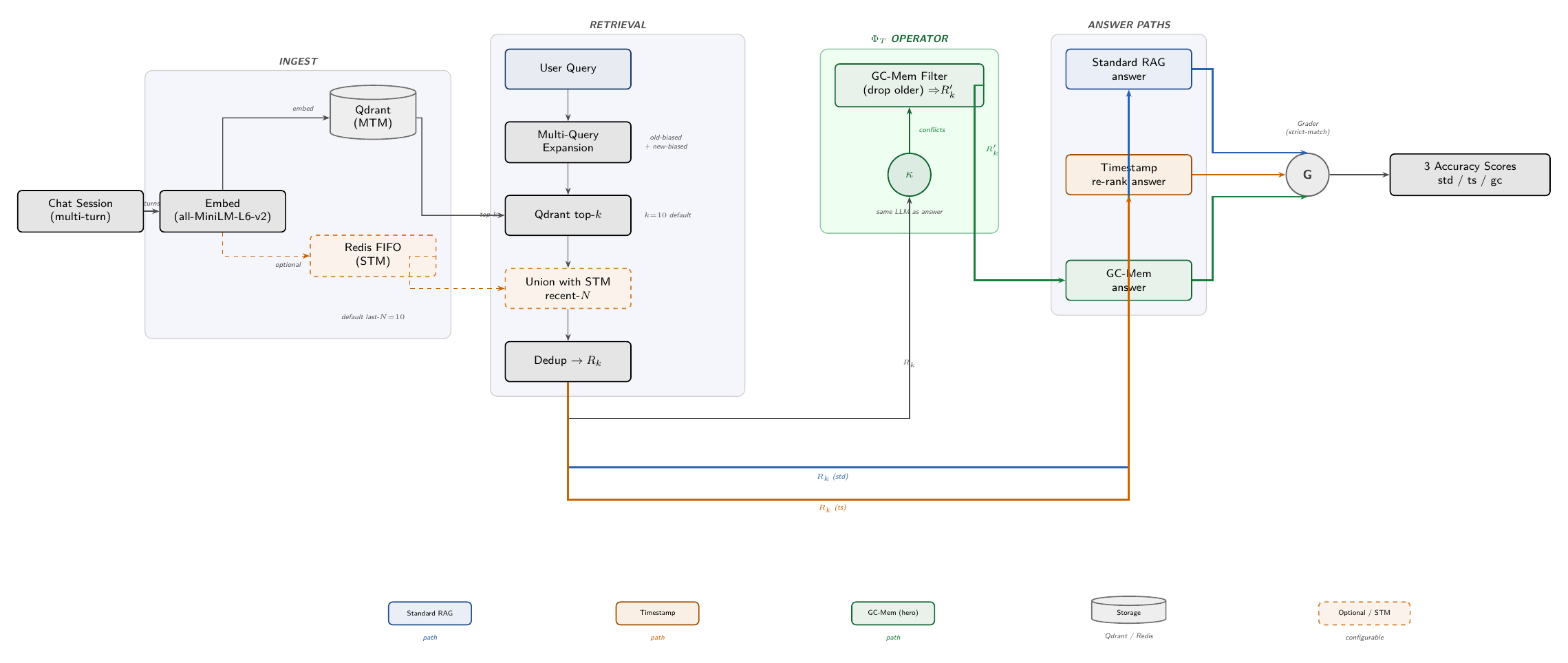}
\caption{System architecture: the end-to-end RAG system used in this paper (this figure depicts the system architecture, not the dataset-generation pipeline). The diagram shows chunking, embedding, the optional STM (short-term memory) FIFO, and the three parallel answer paths (Standard RAG, Timestamp re-rank, and GC-Mem). All approaches receive the same retrieved candidate set so differences are attributable to the conflict oracle $\kappa$ and the temporal dominance operator $\Phi_{\mathcal{T}}$.}
\label{fig:pipeline}
\end{figure}
\FloatBarrier

\section{Results}

\subsection{Timestamp Baselines vs. GC-Mem}
A common heuristic is that simple time-decay or timestamp re-ranking resolves mutable memory. Table \ref{tab:baselines} empirically refutes this. Timestamp heuristics blindly penalize all historical chunks, discarding valid, non-contradicted long-term facts alongside stale ones.

\begin{table}[htbp]
\centering
\begin{tabular}{llccc}
\toprule
Model & Dataset Split & Standard RAG & Timestamp Re-ranking & GC-Mem ($\Phi_{\mathcal{T}}$) \\
\midrule
deepseek-v3 & Large ($n=74$) & 56.8\% & 20.3\% & \textbf{97.3\%*} \\
o4-mini & Medium ($n=38$) & 28.9\% & 10.5\% & \textbf{92.1\%*} \\
gpt-4o & Medium ($n=38$) & 68.4\% & 10.5\% & 10.5\% \\
\bottomrule
\end{tabular}
\caption{Conflict Resolution Accuracy (CR-Acc). Timestamps alone cause severe degradation. (* indicates $p<0.01$ significance).}
\label{tab:baselines}
\end{table}

\begin{figure}[htbp]
\centering
\includegraphics[width=0.8\linewidth]{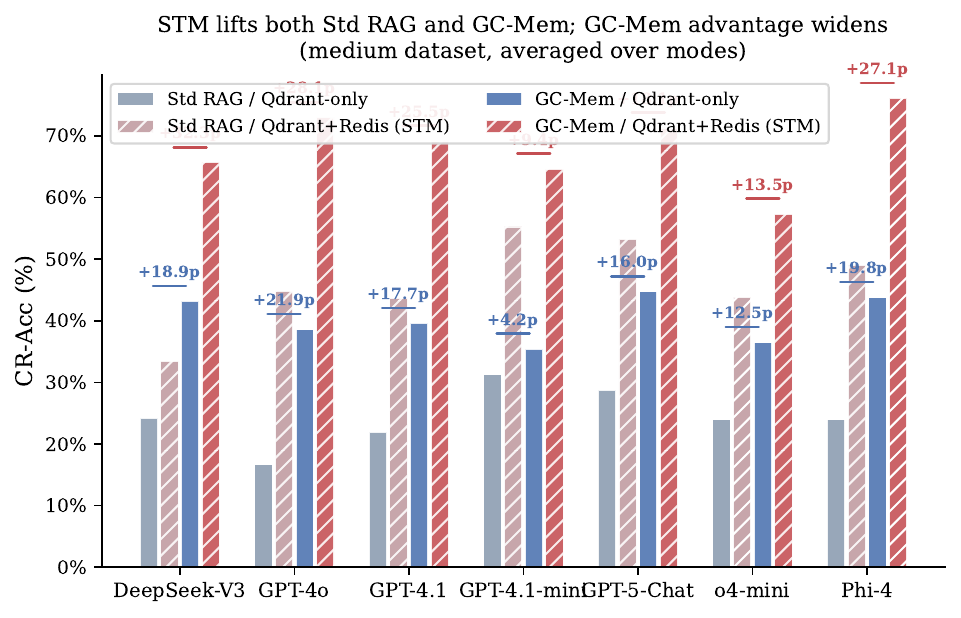}
\caption{Effect of STM (Short-Term Memory) on conflict resolution accuracy across models: comparison between Timestamp re-ranking and GC-Mem. STM supplies recency evidence; the plot shows GC-Mem retains a substantial advantage because it excises contradictory old chunks rather than re-ranking them.}
\label{fig:cr_acc_comp}
\end{figure}

\subsection{Short-Term Memory (STM) Dynamics}
To evaluate architectural interaction, we activated the Short-Term Memory (STM) recency buffer. As demonstrated in Table \ref{tab:stm_lift}, enabling STM raises the baseline accuracy of standard RAG by artificially elevating the exposure of new-state evidence in the retrieval window. However, GC-Mem retains a substantially wider performance gap. Because simple recency heuristics merely surface the new state alongside the vast quantity of historical states, the context remains contradictory. GC-Mem explicitly resolves this by pruning the semantically shadowed stale chunks that the STM forces into the context window.

\begin{table}[htbp]
\centering
\begin{tabular}{llccc}
\toprule
Dataset & STM Buffer & Standard RAG & GC-Mem ($\Phi_{\mathcal{T}}$) & $\Delta$GC Lift \\
\midrule
Medium & STM=off & 24.4\% & 40.2\% & \textbf{+15.8p} \\
Medium & STM=on & 46.1\% & 68.1\% & \textbf{+22.0p} \\
Large & STM=off & 15.9\% & 30.4\% & \textbf{+14.4p} \\
Large & STM=on & 34.5\% & 62.9\% & \textbf{+28.4p} \\
\bottomrule
\end{tabular}
\caption{Headline accuracy averaged over the benchmark models. STM amplifies the value of GC-Mem by supplying the conflict oracle with richer temporal context.}
\label{tab:stm_lift}
\end{table}

\subsection{Detector Dependency and Deployment Threshold}
GC-Mem operates as a cognitive offloading protocol; its success is strictly bounded by the underlying model's reasoning capabilities. Table \ref{tab:precision} demonstrates that while precision is universally high, \textbf{recall is the decisive factor}.

\begin{table}[htbp]
\centering
\begin{tabular}{lccccc}
\toprule
Model & Precision & Recall & F1 Score & CR-Acc $\Delta$ & Protocol Efficacy \\
\midrule
deepseek-v3 & 98.9\% & 95.2\% & 0.97 & +69.0 pp & Optimal \\
o4-mini & 98.1\% & 99.8\% & 0.99 & +79.0 pp & Optimal \\
gpt-4o & 99.7\% & 36.6\% & 0.54 & +4.0 pp & Marginal \\
gpt-4.1-mini & 100.0\% & 0.7\% & 0.01 & -0.2 pp & \textbf{Fails} \\
\bottomrule
\end{tabular}
\caption{Detector Dependency. High recall ($>50\%$) is strictly required for GC-Mem efficacy.}
\label{tab:precision}
\end{table}

\subsection{Ablation: The Failure of Multi-Factor Dominance}
To evaluate whether multi-factor memory scoring helps resolve contradictions, we test a multi-factor scoring variant (e.g., scoring chunks using $\alpha(\text{recency}) + \beta(\text{relevance}) + \gamma(\text{usage}) + \delta(\text{stability})$). We rigorously ablated this variant on the \texttt{o4-mini} architecture. 

\begin{table}[htbp]
\centering
\begin{tabular}{lccc}
\toprule
Variant & CR-Acc (Medium) & CR-Acc (Large) & Status \\
\midrule
Eq. 5 (timestamp-only $\alpha$) & 94.7\% & 97.3\% & Optimal \\
$\alpha + \beta$ (+ relevance) & 86.8\% & 96.0\% & Slight Drop \\
$\alpha + \beta + \gamma$ (+ usage) & \textbf{10.5\%} & \textbf{18.9\%} & \textbf{Severe Crash} \\
Full Multi-Factor (+ stability) & 10.5\% & 18.9\% & \textbf{Severe Crash} \\
\bottomrule
\end{tabular}
\caption{Ablation demonstrating the mathematical failure of non-temporal multi-factor scoring.}
\label{tab:ablation}
\end{table}

\begin{figure}[htbp]
\centering
\includegraphics[width=0.8\linewidth]{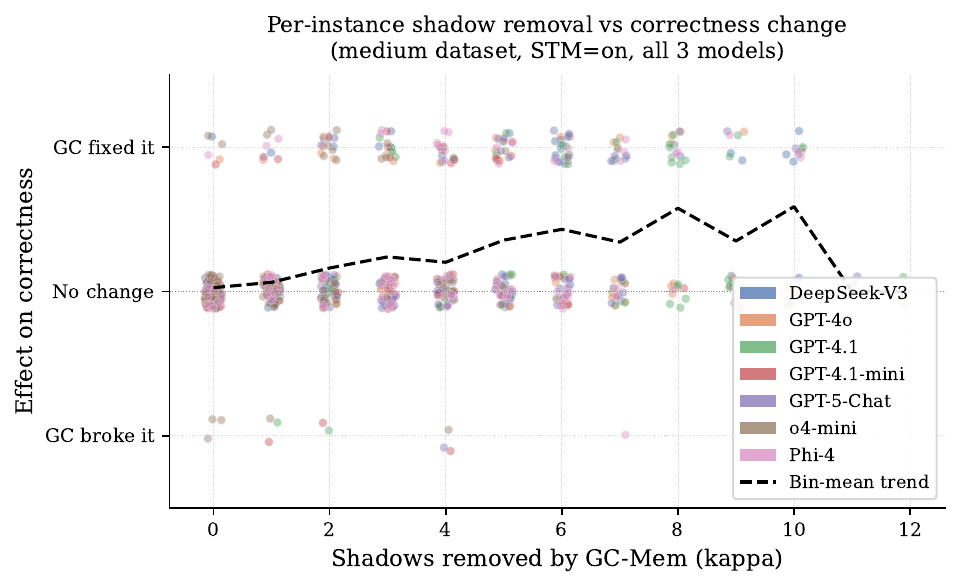}
\caption{Relationship between mean number of chunks flagged as shadowed by $\kappa$ and the accuracy
change induced by GC-Mem. More removable shadows correlate with larger lifts when the detector
is well calibrated.}
\label{fig:k_value}
\end{figure}

As demonstrated in Table \ref{tab:ablation}, utilizing the pure temporal operator ($\Phi_{\mathcal{T}}$) yields 94.7\% CR-Acc. However, introducing usage frequency ($\gamma$) causes accuracy to crash to 10.5\%. Because stale chunks have resided in memory longer, they inherently accumulate higher usage and stability scores. Multi-factor scoring systematically assigns higher dominance to the exact contradictions that must be removed, mathematically violating \Cref{thm:recovery}. Temporal dominance is uniquely required for state convergence.

\subsection{Bag-of-Facts Evaluation: Isolating Protocol Ceiling}
To decouple the fundamental efficacy of the $\Phi_{\mathcal{T}}$ operator from the variance of dense retrieval noise, we conducted a "Bag-of-Facts" ablation. In this setting, the retrieval bottleneck is entirely removed ($k \to \infty$), and all relevant historical and updated chunks are placed directly into the context window.

\begin{table}[htbp]
\centering
\begin{tabular}{lcccc}
\toprule
Model & Mean Std. RAG & Mean GC-Mem & $\Delta$GC Lift & Shadows Pruned \\
\midrule
DeepSeek-V3 & 11.9\% & 54.3\% & \textbf{+42.4p} & 13.28 \\
GPT-4o & 12.9\% & 53.3\% & \textbf{+40.3p} & 15.70 \\
Phi-4 & 12.3\% & 44.6\% & \textbf{+32.3p} & 13.73 \\
04-mini & 19.2\% & 33.4\% & \textbf{+14.2p} & 6.21 \\
\bottomrule
\end{tabular}
\caption{Models ranked by pooled mean $\Delta$GC across dataset sizes in the Bag-of-Facts setting. Removing the retrieval limit exposes the immense baseline vulnerability to the Majority Vote Trap (averaging ~12\% accuracy). GC-Mem consistently excises ~13-15 obsolete chunks per query, recovering the state.}
\label{tab:bof_evaluation}
\end{table}

\begin{figure}[htbp]
\centering
\includegraphics[width=0.8\linewidth]{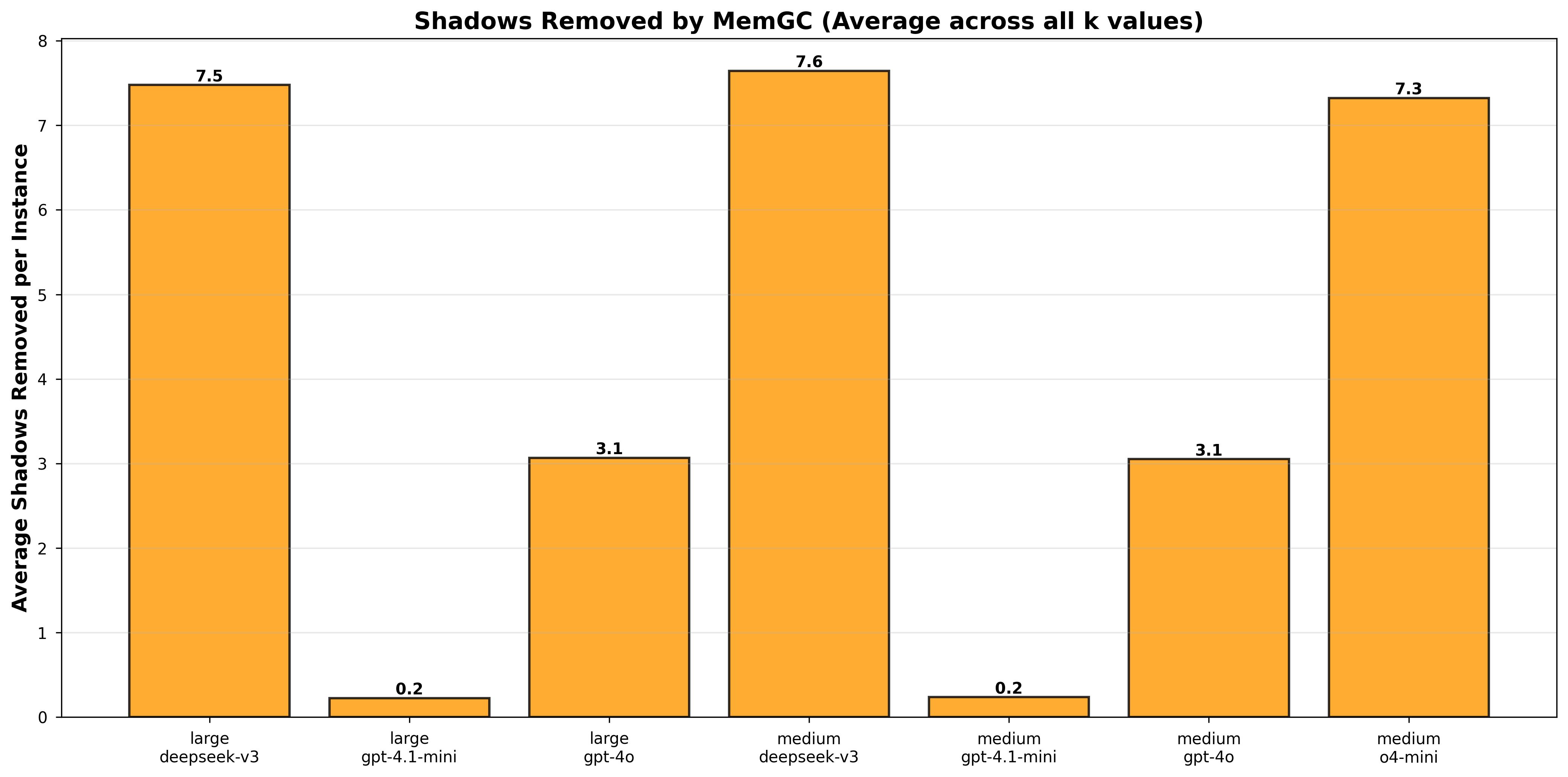}
\caption{Average number of shadows removed by $\Phi_{\mathcal{T}}$ per query and corresponding accuracy lift.}
\label{fig:shadows_removed}
\end{figure}

As shown in Table \ref{tab:bof_evaluation}, when standard RAG is forced to reason over the entire contradictory timeline, its accuracy collapses to roughly ~12\%. The Majority Vote Trap overwhelms the attention mechanism entirely. However, GC-Mem dynamically prunes an average of 13.28 shadowed chunks per query on DeepSeek-V3, lifting generation accuracy by a massive +42.4 percentage points. This confirms that GC-Mem scales proportionally with retrieval volume: as contexts become more dense with contradictions, the necessity of explicit temporal pruning becomes critical.

\subsection{Computational Overhead and Latency}
In production architectures, maintaining strict latency bounds is essential. The pairwise conflict detection $\kappa$ operates via a lightweight NLI prompt strictly bounded by $k$, running exclusively over the retrieved candidate set rather than the global database. 

\begin{table}[htbp]
\centering
\begin{tabular}{llccc}
\toprule
Dataset & STM Buffer & Mean Wall Time (s) & Per-Instance Latency (s) & Avg. Shadows \\
\midrule
Medium & STM=off & 525s & 10.9s & 1.55 \\
Medium & STM=on & 1276s & 26.6s & 3.52 \\
Large & STM=off & 685s & 10.5s & 1.39 \\
Large & STM=on & 973s & 15.0s & 4.07 \\
\bottomrule
\end{tabular}
\caption{End-to-end wall time and per-instance latency by dataset and STM setting. STM increases cost because the unioned candidate set is larger, requiring $\kappa$ to evaluate more conflict pairs, yet it yields higher global accuracy by pruning more obsolete shadows (up to 4.07 per instance).}
\label{tab:latency_matrix}
\end{table}

Table \ref{tab:latency_matrix} details the end-to-end latency. For a standard $k=10$ retrieval without an STM buffer, the GC-Mem protocol operates efficiently at ~10.5 seconds per instance. The introduction of an STM buffer naturally inflates the candidate set, thereby increasing the $O(k^2)$ $\kappa$ evaluation permutations (elevating latency to ~15.0s). However, this overhead remains substantially lower than orchestrating multi-turn, generative LLM self-reflection loops, establishing GC-Mem as a highly viable inference-time architecture.

\section{Discussion and Limitations}

\textbf{When NOT to use GC-Mem.}
Our precision/recall analysis dictates a strict deployment guideline: GC-Mem should only be implemented when paired with a conflict detector capable of $>50\%$ recall. Models like \texttt{gpt-4.1-mini} exhibit 100\% precision but $<1\%$ recall; they fail to detect contradictions, rendering the protocol inert. Furthermore, partial detection (as seen with \texttt{gpt-4o}) can inadvertently leave the context mixed, confusing the generator further.

\textbf{Limitations.}
While the benchmark limits keyword leakage, it remains a synthetic environment optimized for single-attribute mutations. Future work must validate GC-Mem on live, concurrent multi-entity mutations utilizing organic interaction logs. Additionally, while the $O(k^2)$ conflict check overhead is modest for typical $k \in [5, 50]$ (adding $\sim 250$ms to a standard $400$ms RAG pipeline), it may prove prohibitive for ultra-low latency requirements. 

\section{Conclusion}
This work proves that semantic shadowing fatally compromises standard RAG architectures over long horizons. By establishing the theoretical bounds of the Majority Vote Trap, we demonstrate that heuristic time-decay and multi-factor scoring both fail to preserve state consistency. We offer GC-Mem, utilizing a pure temporal dominance operator ($\Phi_{\mathcal{T}}$), which empirically restores $>90\%$ accuracy and provides a mathematically sound protocol for mutable RAG systems.

\FloatBarrier
\bibliographystyle{plainnat}
\bibliography{references}

\clearpage 
\FloatBarrier
\appendix

\setcounter{figure}{0}
\setcounter{table}{0}
\renewcommand{\thefigure}{A.\arabic{figure}}
\renewcommand{\thetable}{A.\arabic{table}}
\renewcommand{\theHfigure}{A.\arabic{figure}}
\renewcommand{\theHtable}{A.\arabic{table}}

\section{Artifact-Backed Results Appendix}
\label{sec:appendix_artifacts}

This appendix collects the full experimental tables and figures underlying the main text: dataset statistics, headline aggregates, per-condition breakdowns, ablations, and supplementary plots. Together they document every benchmark cell, including retrieval recalls, conflict removals, latency, and significance summaries.

\subsection{Dataset Construction and Composition}
\label{sec:appendix_dataset}

\begin{table}[ht]
  \centering
  \caption{GCMEMv2 chat dataset statistics. Per-instance means and standard deviations are reported for the number of old, new and distractor turns; "Total turns" is the dataset-wide token/turn count and "Gen. model" indicates the LLM used to generate the synthetic sessions.}
  \label{app:tab:dataset_d1}
  \small
  \begin{tabular}{lcccccccc}
    \toprule
    \textbf{Dataset}
    & \textbf{$N$}
    & \textbf{Entity}
    & \textbf{Severity}
    & \textbf{$n_{\mathrm{old}}$}
    & \textbf{$m_{\mathrm{new}}$}
    & \textbf{$n_{\mathrm{dist}}$}
    & \textbf{Total turns}
    & \textbf{Gen. model} \\
    & & \textbf{types} & \textbf{tiers}
    & mean (SD) & mean (SD) & mean (SD) & & \\
    \midrule
    Chat-Medium (n=48) & 48 & 12 & 4 & 27.5 (14.9) & 3.0 (1.2) & 12.5 (1.7) & 2,033 & GPT-4o \\
    Chat-Large (n=65) & 65 & 13 & 5 & 27.0 (13.4) & 3.0 (1.1) & 11.4 (1.7) & 2,690 & GPT-4o \\
    \bottomrule
  \end{tabular}
\end{table}

\begin{figure}[htbp]
\centering
\includegraphics[width=\linewidth]{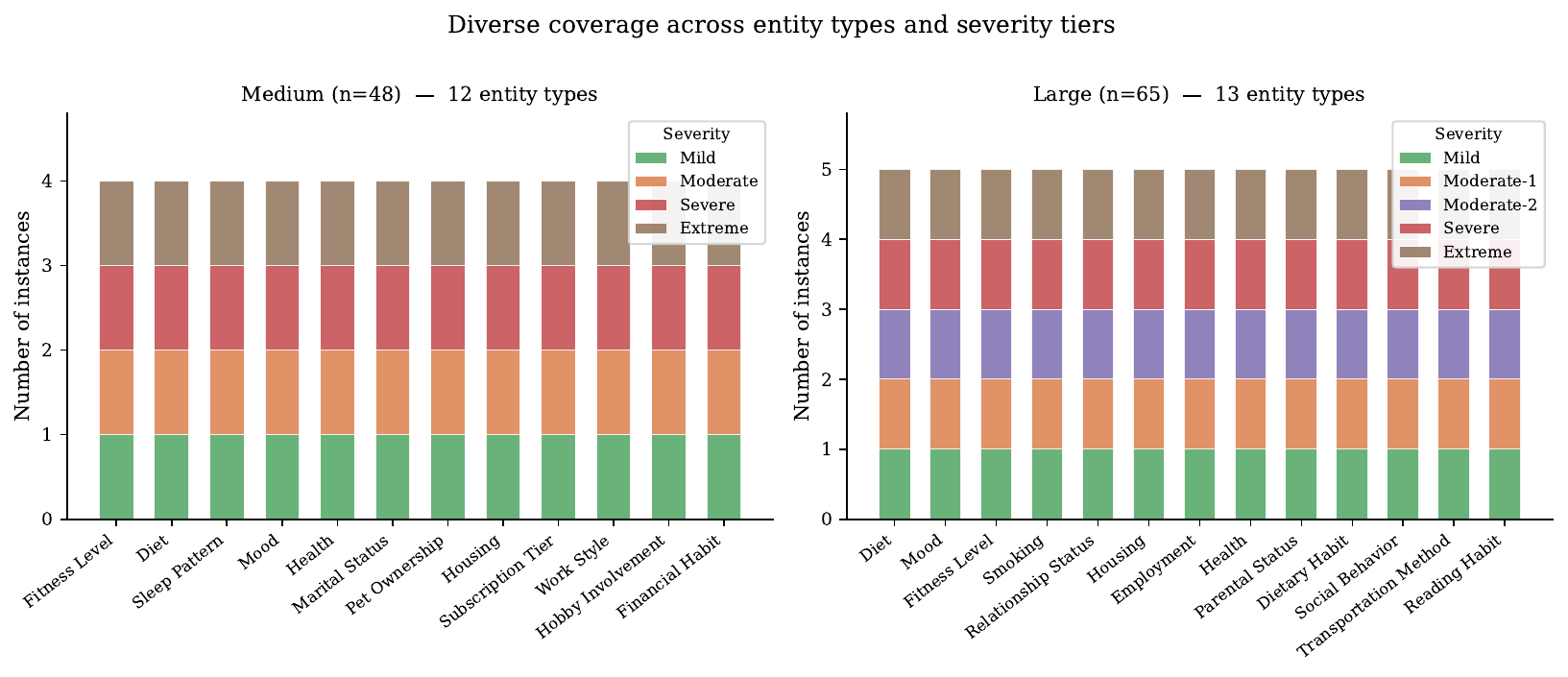}
\caption{Distribution of synthetic entity attributes (stacked by contradiction severity) in Chat-Medium ($n{=}48$) and Chat-Large ($n{=}65$).}
\label{app:fig:dataset_entity_distribution}
\end{figure}

\begin{figure}[htbp]
\centering
\includegraphics[width=\linewidth]{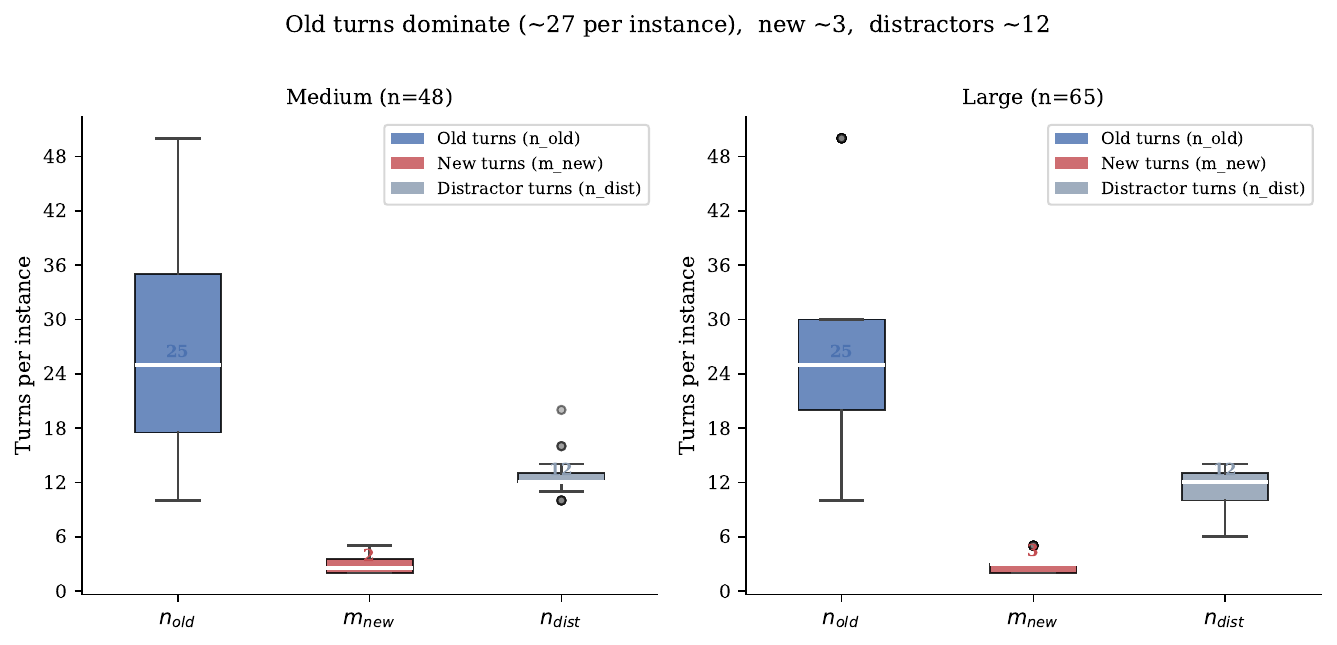}
\caption{Per-instance counts of chat turns by scenario role: \emph{old} (establishes superseded state), \emph{new} (current ground truth), and \emph{distractor} (neutral context).}
\label{app:fig:dataset_turn_composition}
\end{figure}

\begin{figure}[htbp]
\centering
\includegraphics[width=\linewidth]{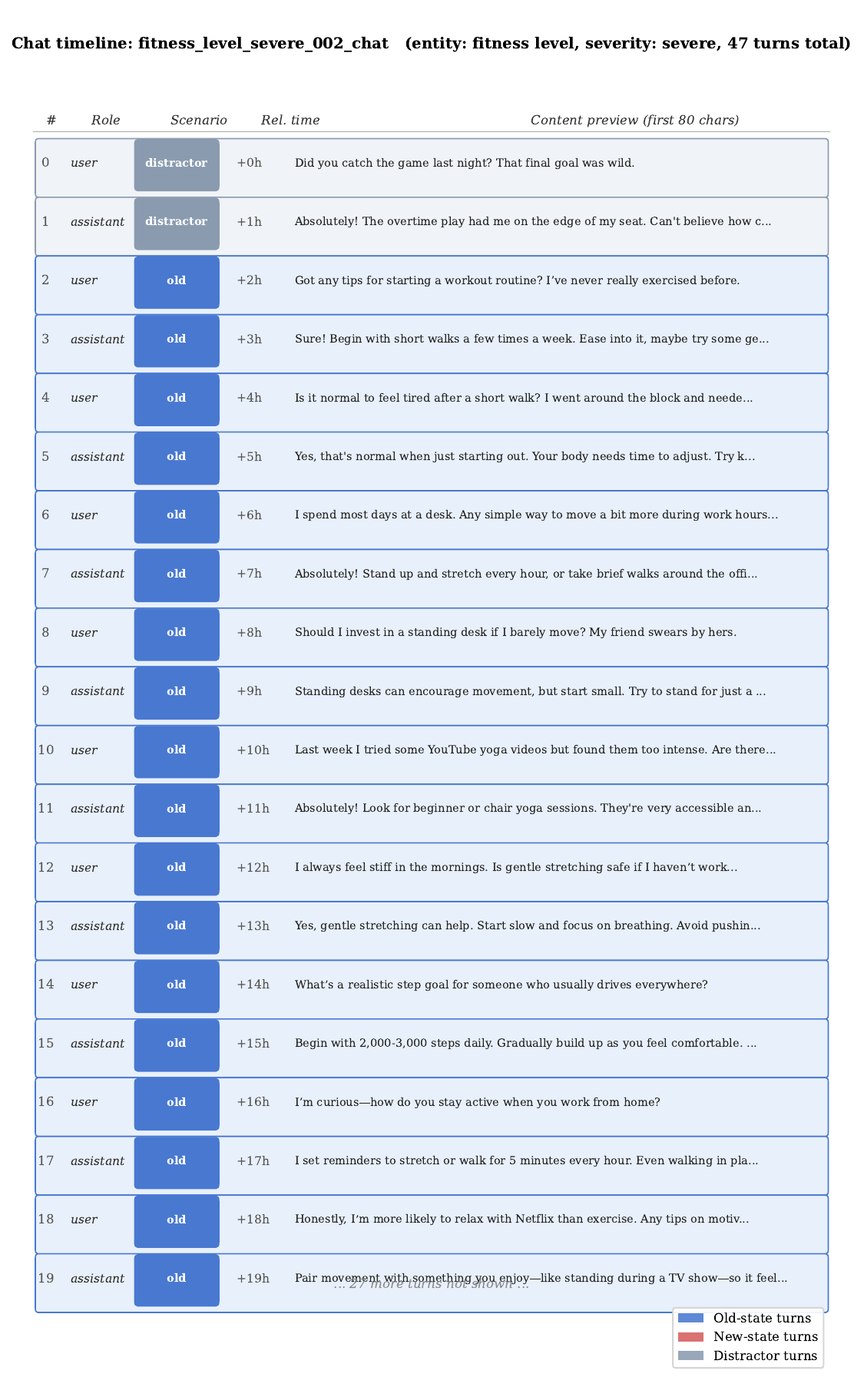}
\caption{Illustrative multi-turn instance: user/assistant turns are color-coded by role to show how state change is conveyed without explicit status keywords.}
\label{app:fig:dataset_chat_example}
\end{figure}


\subsection{Primary Benchmark Tables}
\label{sec:appendix_primary_tables}

\begin{table}[tb]
\centering\small

\begin{tabular}{llrrrrl}
\toprule
Dataset & STM & Std.\ RAG & Timestamp & GC-Mem & $\Delta$GC & Cells \\
\midrule
Medium & STM=off & 24.4\% & 24.4\% & \textbf{40.2\%} & +15.8p \\
Medium & STM=on & 46.1\% & 71.6\% & \textbf{68.1\%} & +22.0p \\
\midrule
Large & STM=off & 15.9\% & 16.2\% & \textbf{30.4\%} & +14.4p \\
Large & STM=on & 34.4\% & 67.5\% & \textbf{61.5\%} & +27.1p \\
\bottomrule
\end{tabular}
\caption{Headline accuracy (\%) averaged across models and prompt modes for each dataset×STM condition. Columns compare Standard RAG, Timestamp re-ranking, and GC-Mem; $\Delta$GC reports the GC-Mem improvement in percentage points. STM=on rows indicate the Short-Term Memory recency buffer was enabled.}
\label{app:tab:headline}
\end{table}

\begin{landscape}
\centering\scriptsize
\setlength{\tabcolsep}{3pt}
\begin{longtable}{llllrrrrrrrrrrr}
\caption{Per-condition accuracy breakdown across models, prompt modes, STM settings and datasets. Columns: Std = Standard RAG, TS = Timestamp re-rank baseline, GC = GC-Mem (best shown in bold). $\Delta$GC = GC-Mem $-$ Standard RAG (percentage points). Rec$_{\mathrm{old/new/dis}}$ = retrieval recall for old-state / new-state / distractor chunks. "Shadows" is mean number of chunks removed by the conflict oracle $\kappa$ per instance; "Err" lists API errors excluded from accuracy; "Sec" shows mean seconds per instance. This granular table is the source for the per-condition figures in the report.}\
\\\toprule
Dataset & STM & Model & Mode & Std & TS & GC & $\Delta$GC & Rec$_\mathrm{old}$ & Rec$_\mathrm{new}$ & Rec$_\mathrm{dis}$ & Shadows & $n$ & Err & Sec \\
\\\midrule
\\\endfirsthead
\\\multicolumn{15}{c}%
{\tablename\ \thetable\ -- \textit{Continued from previous page}} \\
\\\toprule
Dataset & STM & Model & Mode & Std & TS & GC & $\Delta$GC & Rec$_\mathrm{old}$ & Rec$_\mathrm{new}$ & Rec$_\mathrm{dis}$ & Shadows & $n$ & Err & Sec \\
\\\midrule
\\\endhead
\\\midrule \\\multicolumn{15}{r}{\textit{Continued on next page}} \\
\\\endfoot
\\\bottomrule
\\\endlastfoot
Medium & STM=off & DeepSeek-V3 & Forced-choice & 25.0\% & 20.8\% & \textbf{45.8\%} & +20.8p & 24.1\% & 37.2\% & 5.0\% & 1.73 & 48 & 0 & 267 \\
 & STM=off & DeepSeek-V3 & Allow-unclear & 23.4\% & 19.1\% & \textbf{40.4\%} & +17.0p & 24.3\% & 38.0\% & 5.0\% & 1.77 & 47 & 1 & 259 \\
 & STM=off & GPT-4o & Forced-choice & 20.8\% & 22.9\% & \textbf{47.9\%} & +27.1p & 24.1\% & 37.2\% & 5.0\% & 2.27 & 48 & 0 & 669 \\
 & STM=off & GPT-4o & Allow-unclear & 12.5\% & 18.8\% & \textbf{29.2\%} & +16.7p & 24.1\% & 37.2\% & 5.0\% & 2.19 & 48 & 0 & 486 \\
 & STM=off & GPT-4.1 & Forced-choice & 20.8\% & 22.9\% & \textbf{43.8\%} & +22.9p & 24.1\% & 37.2\% & 5.0\% & 1.73 & 48 & 0 & 387 \\
 & STM=off & GPT-4.1 & Allow-unclear & 22.9\% & 22.9\% & \textbf{35.4\%} & +12.5p & 24.1\% & 37.2\% & 5.0\% & 1.69 & 48 & 0 & 375 \\
 & STM=off & GPT-4.1-mini & Forced-choice & 33.3\% & 31.2\% & \textbf{33.3\%} & +0.0p & 24.1\% & 37.2\% & 5.0\% & 0.92 & 48 & 0 & 561 \\
 & STM=off & GPT-4.1-mini & Allow-unclear & 29.2\% & 27.1\% & \textbf{37.5\%} & +8.3p & 24.1\% & 37.2\% & 5.0\% & 0.88 & 48 & 0 & 560 \\
 & STM=off & GPT-5-Chat & Forced-choice & 34.0\% & 29.8\% & \textbf{48.9\%} & +14.9p & 23.8\% & 38.0\% & 5.1\% & 1.70 & 47 & 1 & 239 \\
 & STM=off & GPT-5-Chat & Allow-unclear & 23.4\% & 25.5\% & \textbf{40.4\%} & +17.0p & 24.2\% & 38.0\% & 4.9\% & 1.66 & 47 & 1 & 243 \\
 & STM=off & o4-mini & Forced-choice & 25.0\% & 25.0\% & \textbf{39.6\%} & +14.6p & 24.1\% & 37.2\% & 5.0\% & 0.92 & 48 & 0 & 1347 \\
 & STM=off & o4-mini & Allow-unclear & 22.9\% & 18.8\% & \textbf{33.3\%} & +10.4p & 24.1\% & 37.2\% & 5.0\% & 0.90 & 48 & 0 & 1330 \\
 & STM=off & Phi-4 & Forced-choice & 31.2\% & 35.4\% & \textbf{47.9\%} & +16.7p & 24.1\% & 37.2\% & 5.0\% & 1.67 & 48 & 0 & 330 \\
 & STM=off & Phi-4 & Allow-unclear & 16.7\% & 20.8\% & \textbf{39.6\%} & +22.9p & 24.1\% & 37.2\% & 5.0\% & 1.73 & 48 & 0 & 297 \\
 & STM=on & DeepSeek-V3 & Forced-choice & 37.5\% & 79.2\% & \textbf{70.8\%} & +33.3p & 32.1\% & 98.2\% & 43.3\% & 3.81 & 48 & 0 & 868 \\
 & STM=on & DeepSeek-V3 & Allow-unclear & 29.2\% & 62.5\% & \textbf{60.4\%} & +31.2p & 32.1\% & 98.2\% & 43.3\% & 3.77 & 48 & 0 & 914 \\
 & STM=on & GPT-4o & Forced-choice & 43.8\% & 79.2\% & \textbf{77.1\%} & +33.3p & 32.1\% & 98.2\% & 43.3\% & 4.77 & 48 & 0 & 1198 \\
 & STM=on & GPT-4o & Allow-unclear & 45.8\% & 62.5\% & \textbf{68.8\%} & +22.9p & 32.1\% & 98.2\% & 43.3\% & 4.71 & 48 & 0 & 1252 \\
 & STM=on & GPT-4.1 & Forced-choice & 46.8\% & 76.6\% & \textbf{70.2\%} & +23.4p & 32.2\% & 98.2\% & 43.0\% & 4.79 & 47 & 1 & 733 \\
 & STM=on & GPT-4.1 & Allow-unclear & 40.4\% & 72.3\% & \textbf{68.1\%} & +27.7p & 32.2\% & 98.2\% & 43.0\% & 4.91 & 47 & 1 & 720 \\
 & STM=on & GPT-4.1-mini & Forced-choice & 58.3\% & 72.9\% & \textbf{66.7\%} & +8.3p & 32.1\% & 98.2\% & 43.3\% & 1.90 & 48 & 0 & 1514 \\
 & STM=on & GPT-4.1-mini & Allow-unclear & 52.1\% & 72.9\% & \textbf{62.5\%} & +10.4p & 32.1\% & 98.2\% & 43.3\% & 2.17 & 48 & 0 & 1092 \\
 & STM=on & GPT-5-Chat & Forced-choice & 53.2\% & 72.3\% & \textbf{72.3\%} & +19.1p & 32.2\% & 98.2\% & 43.0\% & 3.94 & 47 & 1 & 1194 \\
 & STM=on & GPT-5-Chat & Allow-unclear & 53.2\% & 68.1\% & \textbf{70.2\%} & +17.0p & 32.2\% & 98.2\% & 43.0\% & 3.91 & 47 & 1 & 1060 \\
 & STM=on & o4-mini & Forced-choice & 45.8\% & 68.8\% & \textbf{58.3\%} & +12.5p & 32.1\% & 98.2\% & 43.3\% & 1.44 & 48 & 0 & 3113 \\
 & STM=on & o4-mini & Allow-unclear & 41.7\% & 62.5\% & \textbf{56.2\%} & +14.6p & 32.1\% & 98.2\% & 43.3\% & 1.50 & 48 & 0 & 2874 \\
 & STM=on & Phi-4 & Forced-choice & 58.3\% & 85.4\% & \textbf{83.3\%} & +25.0p & 32.1\% & 98.2\% & 43.3\% & 3.92 & 48 & 0 & 709 \\
 & STM=on & Phi-4 & Allow-unclear & 39.6\% & 66.7\% & \textbf{68.8\%} & +29.2p & 32.1\% & 98.2\% & 43.3\% & 3.71 & 48 & 0 & 623 \\
\midrule
Large & STM=off & DeepSeek-V3 & Forced-choice & 15.4\% & 13.9\% & \textbf{38.5\%} & +23.1p & 22.7\% & 29.9\% & 4.5\% & 1.54 & 65 & 0 & 366 \\
 & STM=off & DeepSeek-V3 & Allow-unclear & 15.5\% & 19.0\% & \textbf{31.0\%} & +15.5p & 22.9\% & 29.3\% & 4.6\% & 1.38 & 58 & 7 & 365 \\
 & STM=off & GPT-4o & Forced-choice & 15.4\% & 16.9\% & \textbf{35.4\%} & +20.0p & 22.7\% & 29.9\% & 4.5\% & 2.34 & 65 & 0 & 459 \\
 & STM=off & GPT-4o & Allow-unclear & 7.7\% & 7.7\% & \textbf{30.8\%} & +23.1p & 22.7\% & 29.9\% & 4.5\% & 2.32 & 65 & 0 & 441 \\
 & STM=off & GPT-4.1 & Forced-choice & 15.4\% & 16.9\% & \textbf{33.9\%} & +18.5p & 22.7\% & 29.9\% & 4.5\% & 1.95 & 65 & 0 & 480 \\
 & STM=off & GPT-4.1 & Allow-unclear & 16.9\% & 21.5\% & \textbf{29.2\%} & +12.3p & 22.7\% & 29.9\% & 4.5\% & 1.82 & 65 & 0 & 478 \\
 & STM=off & GPT-4.1-mini & Forced-choice & 18.5\% & 21.5\% & \textbf{27.7\%} & +9.2p & 22.7\% & 29.9\% & 4.5\% & 0.60 & 65 & 0 & 414 \\
 & STM=off & GPT-4.1-mini & Allow-unclear & 20.0\% & 18.5\% & \textbf{21.5\%} & +1.5p & 22.7\% & 29.9\% & 4.5\% & 0.62 & 65 & 0 & 407 \\
 & STM=off & GPT-5-Chat & Forced-choice & 20.0\% & 16.0\% & \textbf{36.0\%} & +16.0p & 22.4\% & 29.9\% & 5.8\% & 1.84 & 50 & 15 & 650 \\
 & STM=off & GPT-5-Chat & Allow-unclear & 20.0\% & 15.4\% & \textbf{35.4\%} & +15.4p & 22.7\% & 29.9\% & 4.5\% & 1.51 & 65 & 0 & 826 \\
 & STM=off & o4-mini & Forced-choice & 10.8\% & 13.9\% & \textbf{27.7\%} & +16.9p & 22.7\% & 29.9\% & 4.5\% & 0.55 & 65 & 0 & 1895 \\
 & STM=off & o4-mini & Allow-unclear & 12.3\% & 12.3\% & \textbf{23.1\%} & +10.8p & 22.7\% & 29.9\% & 4.5\% & 0.60 & 65 & 0 & 2139 \\
 & STM=off & Phi-4 & Forced-choice & 18.5\% & 20.0\% & \textbf{30.8\%} & +12.3p & 22.7\% & 29.9\% & 4.5\% & 1.15 & 65 & 0 & 345 \\
 & STM=off & Phi-4 & Allow-unclear & 16.9\% & 13.9\% & \textbf{24.6\%} & +7.7p & 22.7\% & 29.9\% & 4.5\% & 1.23 & 65 & 0 & 331 \\
 & STM=on & DeepSeek-V3 & Forced-choice & 27.7\% & 80.0\% & \textbf{69.2\%} & +41.5p & 29.1\% & 98.5\% & 47.8\% & 4.51 & 65 & 0 & 1270 \\
 & STM=on & DeepSeek-V3 & Allow-unclear & 23.1\% & 61.5\% & \textbf{63.1\%} & +40.0p & 29.1\% & 98.5\% & 47.8\% & 4.46 & 65 & 0 & 1207 \\
 & STM=on & GPT-4o & Forced-choice & 36.9\% & 80.0\% & \textbf{83.1\%} & +46.2p & 29.1\% & 98.5\% & 47.8\% & 5.51 & 65 & 0 & 1593 \\
 & STM=on & GPT-4o & Allow-unclear & 33.9\% & 47.7\% & \textbf{58.5\%} & +24.6p & 29.1\% & 98.5\% & 47.8\% & 5.43 & 65 & 0 & 1613 \\
 & STM=on & GPT-4.1 & Forced-choice & 28.1\% & 79.7\% & \textbf{67.2\%} & +39.1p & 29.3\% & 98.4\% & 48.2\% & 5.09 & 64 & 1 & 794 \\
 & STM=on & GPT-4.1 & Allow-unclear & 33.3\% & 55.6\% & \textbf{55.6\%} & +22.2p & 28.9\% & 98.4\% & 48.0\% & 4.87 & 63 & 2 & 791 \\
 & STM=on & GPT-4.1-mini & Forced-choice & 36.9\% & 75.4\% & \textbf{55.4\%} & +18.5p & 29.1\% & 98.5\% & 47.8\% & 2.29 & 65 & 0 & 1019 \\
 & STM=on & GPT-4.1-mini & Allow-unclear & 35.4\% & 60.0\% & \textbf{47.7\%} & +12.3p & 29.1\% & 98.5\% & 47.8\% & 2.20 & 65 & 0 & 946 \\
 & STM=on & GPT-5-Chat & Forced-choice & 39.7\% & 76.2\% & \textbf{68.2\%} & +28.6p & 28.9\% & 98.4\% & 48.0\% & 4.25 & 63 & 2 & 1166 \\
 & STM=on & GPT-5-Chat & Allow-unclear & 39.1\% & 68.8\% & \textbf{65.6\%} & +26.6p & 29.3\% & 98.4\% & 48.2\% & 4.31 & 64 & 1 & 982 \\
 & STM=on & o4-mini & Forced-choice & 33.9\% & 70.8\% & \textbf{44.6\%} & +10.8p & 29.1\% & 98.5\% & 47.8\% & 1.62 & 65 & 0 & 3954 \\
 & STM=on & o4-mini & Allow-unclear & 32.3\% & 58.5\% & \textbf{46.2\%} & +13.9p & 29.1\% & 98.5\% & 47.8\% & 1.54 & 65 & 0 & 4030 \\
 & STM=on & Phi-4 & Forced-choice & 47.7\% & 80.0\% & \textbf{75.4\%} & +27.7p & 29.1\% & 98.5\% & 47.8\% & 4.28 & 65 & 0 & 762 \\
 & STM=on & Phi-4 & Allow-unclear & 33.9\% & 50.8\% & \textbf{61.5\%} & +27.7p & 29.1\% & 98.5\% & 47.8\% & 4.42 & 65 & 0 & 792 \\
\bottomrule
\end{longtable}
\label{app:tab:per_condition}
\end{landscape}

\begin{table}[tb]
\centering\small

\begin{tabular}{llrr}
\toprule
STM & Model & Forced-choice $\Delta$GC & Allow-unclear $\Delta$GC \\
\midrule
STM=off & DeepSeek-V3 & \textbf{+22.0p} & +16.3p \\
STM=off & GPT-4o & \textbf{+23.5p} & +19.9p \\
STM=off & GPT-4.1 & \textbf{+20.7p} & +12.4p \\
STM=off & GPT-4.1-mini & +4.6p & \textbf{+4.9p} \\
STM=off & GPT-5-Chat & +15.4p & \textbf{+16.2p} \\
STM=off & o4-mini & \textbf{+15.8p} & +10.6p \\
STM=off & Phi-4 & +14.5p & \textbf{+15.3p} \\
\midrule
STM=on & DeepSeek-V3 & \textbf{+37.4p} & +35.6p \\
STM=on & GPT-4o & \textbf{+39.7p} & +23.8p \\
STM=on & GPT-4.1 & \textbf{+31.2p} & +24.9p \\
STM=on & GPT-4.1-mini & \textbf{+13.4p} & +11.4p \\
STM=on & GPT-5-Chat & \textbf{+23.9p} & +21.8p \\
STM=on & o4-mini & +11.6p & \textbf{+14.2p} \\
STM=on & Phi-4 & +26.3p & \textbf{+28.4p} \\
\bottomrule
\end{tabular}
\caption{Prompt-mode ablation: mean $\Delta$GC (pp) across both datasets for Forced-choice vs.\ Allow-unclear prompts, per model and STM setting. Bold indicates the stronger prompt mode. GC-Mem improves over Standard RAG in all 12 cells.}
\label{app:tab:mode_ablation}
\end{table}

\begin{table}[tb]
\centering\small

\begin{tabular}{llrrrr}
\toprule
Dataset & STM & Recall$_\mathrm{old}$ & Recall$_\mathrm{new}$ & Recall$_\mathrm{dis}$ & Cells \\
\midrule
Medium & STM=off & 24.1\% & \textbf{37.4\%} & 5.0\% & 14/6 \\
Medium & STM=on & 32.1\% & \textbf{98.2\%} & 43.2\% & 14/6 \\
\midrule
Large & STM=off & 22.7\% & \textbf{29.9\%} & 4.6\% & 14/6 \\
Large & STM=on & 29.1\% & \textbf{98.5\%} & 47.9\% & 14/6 \\
\bottomrule
\end{tabular}
\caption{Retrieval recall before and after activating the Short-Term Memory (STM) buffer, averaged over 3 models $\times$ 2 prompt modes. Recall$_{\mathrm{new}}$ = fraction of updated (new-truth) chunks retrieved; Recall$_{\mathrm{dis}}$ = fraction of distractor chunks retrieved (lower is better). STM raises Recall$_{\mathrm{new}}$ from $\sim$37\% to $\sim$98\%.}
\label{app:tab:stm_recall}
\end{table}

\begin{table}[tb]
\centering\small

\begin{tabular}{lllrr}
\toprule
STM & Model & Mode & Avg.\ Shadows Removed & Datasets \\
\midrule
STM=off & DeepSeek-V3 & Forced-choice & 1.64 & 2/2 \\
STM=off & DeepSeek-V3 & Allow-unclear & 1.57 & 2/2 \\
STM=off & GPT-4o & Forced-choice & 2.30 & 2/2 \\
STM=off & GPT-4o & Allow-unclear & 2.25 & 2/2 \\
STM=off & GPT-4.1 & Forced-choice & 1.84 & 2/2 \\
STM=off & GPT-4.1 & Allow-unclear & 1.75 & 2/2 \\
STM=off & GPT-4.1-mini & Forced-choice & 0.76 & 2/2 \\
STM=off & GPT-4.1-mini & Allow-unclear & 0.75 & 2/2 \\
STM=off & GPT-5-Chat & Forced-choice & 1.77 & 2/2 \\
STM=off & GPT-5-Chat & Allow-unclear & 1.58 & 2/2 \\
STM=off & o4-mini & Forced-choice & 0.74 & 2/2 \\
STM=off & o4-mini & Allow-unclear & 0.75 & 2/2 \\
STM=off & Phi-4 & Forced-choice & 1.41 & 2/2 \\
STM=off & Phi-4 & Allow-unclear & 1.48 & 2/2 \\
\midrule
STM=on & DeepSeek-V3 & Forced-choice & 4.16 & 2/2 \\
STM=on & DeepSeek-V3 & Allow-unclear & 4.12 & 2/2 \\
STM=on & GPT-4o & Forced-choice & 5.14 & 2/2 \\
STM=on & GPT-4o & Allow-unclear & 5.07 & 2/2 \\
STM=on & GPT-4.1 & Forced-choice & 4.94 & 2/2 \\
STM=on & GPT-4.1 & Allow-unclear & 4.89 & 2/2 \\
STM=on & GPT-4.1-mini & Forced-choice & 2.09 & 2/2 \\
STM=on & GPT-4.1-mini & Allow-unclear & 2.19 & 2/2 \\
STM=on & GPT-5-Chat & Forced-choice & 4.09 & 2/2 \\
STM=on & GPT-5-Chat & Allow-unclear & 4.11 & 2/2 \\
STM=on & o4-mini & Forced-choice & 1.53 & 2/2 \\
STM=on & o4-mini & Allow-unclear & 1.52 & 2/2 \\
STM=on & Phi-4 & Forced-choice & 4.10 & 2/2 \\
STM=on & Phi-4 & Allow-unclear & 4.06 & 2/2 \\
\bottomrule
\end{tabular}
\caption{Mean number of shadow chunks removed per instance by the GC-Mem operator $\kappa$, averaged over both datasets. STM=on substantially increases the number of outdated chunks surfaced and pruned, confirming that STM provides richer context for the conflict detector.}
\label{app:tab:kappa_shadows}
\end{table}

\begin{table}[tb]
\centering\small

\begin{tabular}{lllrrrll}
\toprule
Dataset & Model & Mode & Std.\ RAG & Timestamp & GC-Mem & Winner & Gap \\
\midrule
Medium & DeepSeek-V3 & Forced-choice & 37.5\% & \textbf{79.2\%} & 70.8\% & Timestamp & +8.3pp \\
 & DeepSeek-V3 & Allow-unclear & 29.2\% & \textbf{62.5\%} & 60.4\% & Timestamp & +2.1pp \\
 & GPT-4o & Forced-choice & 43.8\% & \textbf{79.2\%} & 77.1\% & Timestamp & +2.1pp \\
 & GPT-4o & Allow-unclear & 45.8\% & 62.5\% & \textbf{68.8\%} & GC-Mem & +6.2pp \\
 & GPT-4.1 & Forced-choice & 46.8\% & \textbf{76.6\%} & 70.2\% & Timestamp & +6.4pp \\
 & GPT-4.1 & Allow-unclear & 40.4\% & \textbf{72.3\%} & 68.1\% & Timestamp & +4.3pp \\
 & GPT-4.1-mini & Forced-choice & 58.3\% & \textbf{72.9\%} & 66.7\% & Timestamp & +6.2pp \\
 & GPT-4.1-mini & Allow-unclear & 52.1\% & \textbf{72.9\%} & 62.5\% & Timestamp & +10.4pp \\
 & GPT-5-Chat & Forced-choice & 53.2\% & 72.3\% & 72.3\% & Tie & 0.0pp \\
 & GPT-5-Chat & Allow-unclear & 53.2\% & 68.1\% & \textbf{70.2\%} & GC-Mem & +2.1pp \\
 & o4-mini & Forced-choice & 45.8\% & \textbf{68.8\%} & 58.3\% & Timestamp & +10.4pp \\
 & o4-mini & Allow-unclear & 41.7\% & \textbf{62.5\%} & 56.2\% & Timestamp & +6.2pp \\
 & Phi-4 & Forced-choice & 58.3\% & \textbf{85.4\%} & 83.3\% & Timestamp & +2.1pp \\
 & Phi-4 & Allow-unclear & 39.6\% & 66.7\% & \textbf{68.8\%} & GC-Mem & +2.1pp \\
\midrule
Large & DeepSeek-V3 & Forced-choice & 27.7\% & \textbf{80.0\%} & 69.2\% & Timestamp & +10.8pp \\
 & DeepSeek-V3 & Allow-unclear & 23.1\% & 61.5\% & \textbf{63.1\%} & GC-Mem & +1.5pp \\
 & GPT-4o & Forced-choice & 36.9\% & 80.0\% & \textbf{83.1\%} & GC-Mem & +3.1pp \\
 & GPT-4o & Allow-unclear & 33.9\% & 47.7\% & \textbf{58.5\%} & GC-Mem & +10.8pp \\
 & GPT-4.1 & Forced-choice & 28.1\% & \textbf{79.7\%} & 67.2\% & Timestamp & +12.5pp \\
 & GPT-4.1 & Allow-unclear & 33.3\% & 55.6\% & 55.6\% & Tie & 0.0pp \\
 & GPT-4.1-mini & Forced-choice & 36.9\% & \textbf{75.4\%} & 55.4\% & Timestamp & +20.0pp \\
 & GPT-4.1-mini & Allow-unclear & 35.4\% & \textbf{60.0\%} & 47.7\% & Timestamp & +12.3pp \\
 & GPT-5-Chat & Forced-choice & 39.7\% & \textbf{76.2\%} & 68.2\% & Timestamp & +7.9pp \\
 & GPT-5-Chat & Allow-unclear & 39.1\% & \textbf{68.8\%} & 65.6\% & Timestamp & +3.1pp \\
 & o4-mini & Forced-choice & 33.9\% & \textbf{70.8\%} & 44.6\% & Timestamp & +26.2pp \\
 & o4-mini & Allow-unclear & 32.3\% & \textbf{58.5\%} & 46.2\% & Timestamp & +12.3pp \\
 & Phi-4 & Forced-choice & 47.7\% & \textbf{80.0\%} & 75.4\% & Timestamp & +4.6pp \\
 & Phi-4 & Allow-unclear & 33.9\% & 50.8\% & \textbf{61.5\%} & GC-Mem & +10.8pp \\
\bottomrule
\end{tabular}
\caption{Head-to-head comparison of Timestamp re-ranking and GC-Mem under STM=on (where the STM buffer dramatically improves retrieval recall). Bold marks the stronger of the two non-trivial baselines. Timestamp becomes competitive under STM because high recall reduces its recency-ranking advantage gap, yet GC-Mem retains an edge in Allow-unclear (lenient) conditions.}
\label{app:tab:ts_race_stm}
\end{table}

\begin{table}[tb]
\centering\small

\begin{tabular}{llrrrr}
\toprule
Dataset & STM & Mean Wall Time (s) & Per-instance (s) & Avg.\ Shadows & Cells \\
\midrule
Medium & STM=off & 525 & 10.9 & 1.55 & 14/6 \\
Medium & STM=on & 1276 & 26.6 & 3.52 & 14/6 \\
\midrule
Large & STM=off & 685 & 10.5 & 1.39 & 14/6 \\
Large & STM=on & 1494 & 23.0 & 3.91 & 14/6 \\
\bottomrule
\end{tabular}
\caption{Wall-clock time per condition averaged over 3 models $\times$ 2 prompt modes, and mean shadows removed per instance. STM=on increases latency due to additional STM retrieval and conflict detection, but also removes substantially more shadow chunks.}
\label{app:tab:latency}
\end{table}

\subsection{Primary Benchmark Figures}
\label{sec:appendix_primary_figures}
\begin{figure}[htbp]
\centering
\includegraphics[width=\linewidth]{artifacts/figures/fig2_stm_lift.pdf}
\caption{Effect of enabling STM (Redis FIFO of recent chunks) on standard RAG vs.\ GC-Mem accuracy. STM raises both baselines by improving exposure of new-state evidence; GC-Mem retains a larger gap because it removes contradictions that simple recency heuristics leave in context.}
\label{app:fig:stm_lift}
\end{figure}

\FloatBarrier
\subsection{Example Prompt}
\label{app:example_prompt}
\begin{figure}[htbp]
\centering
\includegraphics[width=\linewidth]{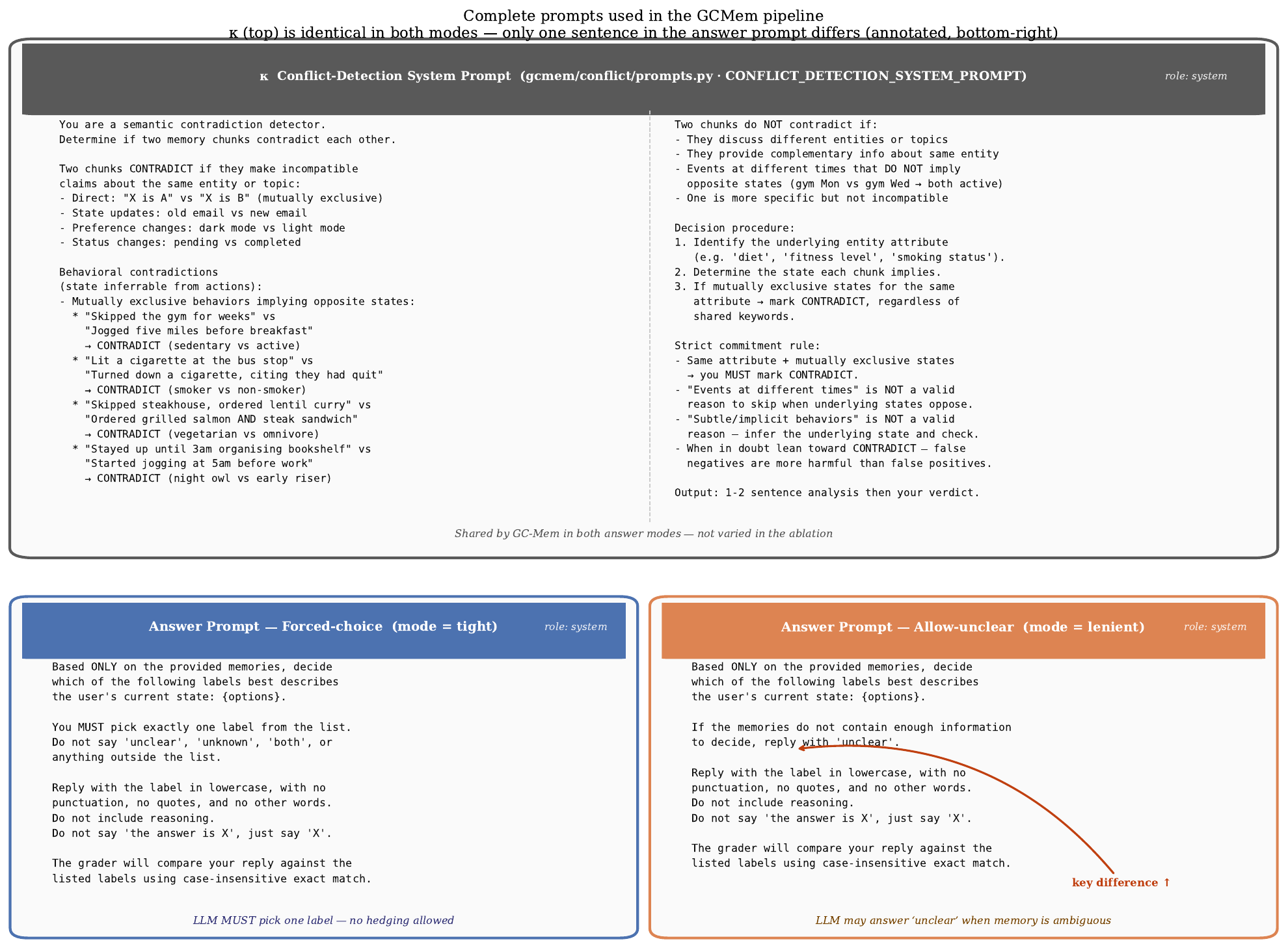}
\caption{Example prompt templates used for the prompt-mode ablation. The figure shows the two prompt styles compared: Forced-choice (must pick a label) and Allow-unclear (model may respond with ``unclear''). These were used across all models to measure robustness of GC-Mem to prompt framing.}
\label{app:fig:example_prompt}
\end{figure}

\clearpage
\begin{figure}[p]
\centering
\includegraphics[width=\linewidth,height=0.78\textheight,keepaspectratio]{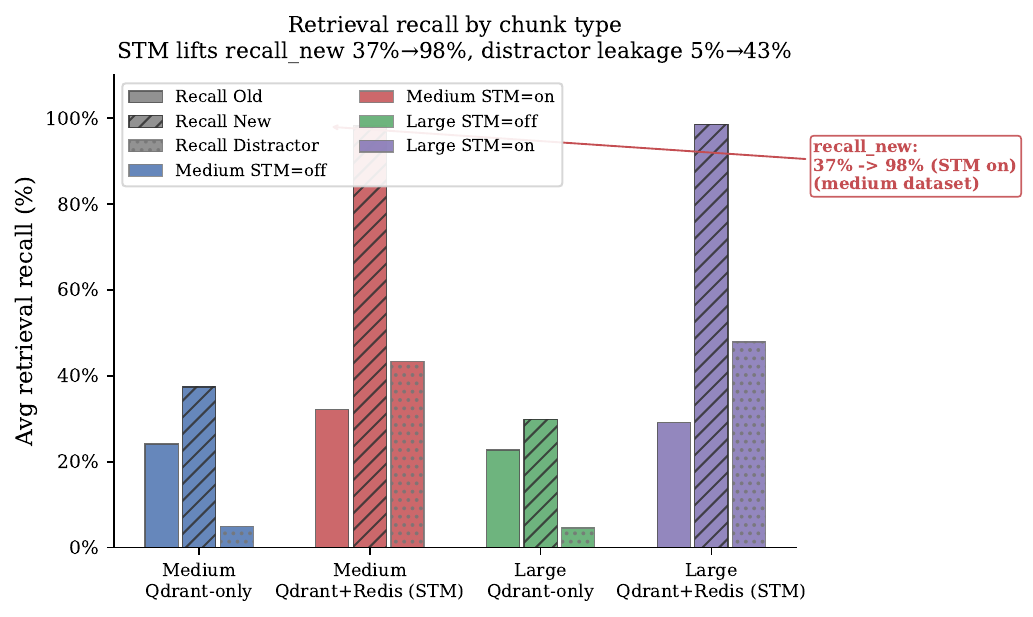}
\caption{Retrieval recall by chunk type (old-state, new-state, distractor) comparing STM off vs on. STM substantially raises recall of new-state chunks, which increases the number of contradictory pairs in the candidate set and therefore the benefit of applying the temporal dominance operator $\Phi_{\mathcal{T}}$.}
\label{app:fig:retrieval_recall}
\end{figure}

\begin{figure}[htbp]
\centering
\includegraphics[width=\linewidth]{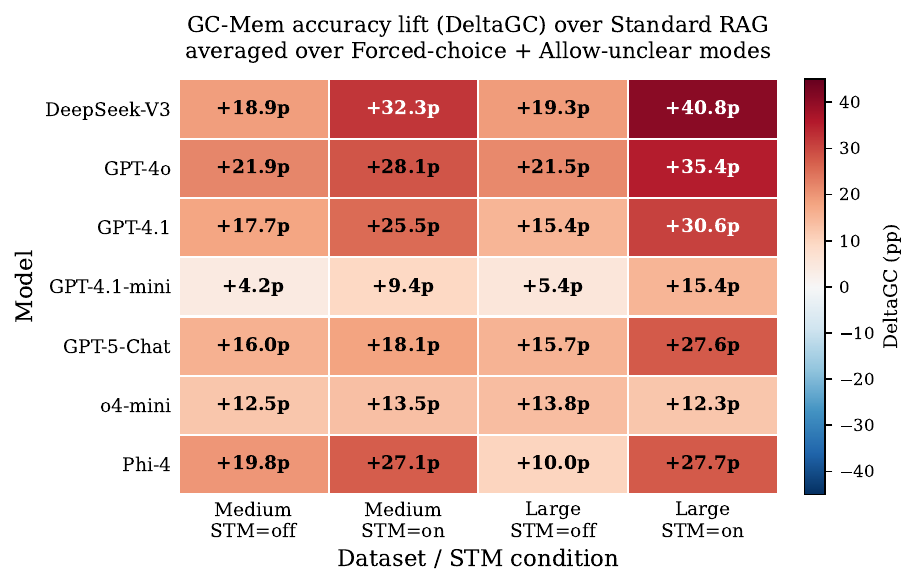}
\caption{Heatmap summarizing $\Delta$GC (GC-Mem minus Standard RAG) across model × dataset × STM cells. Warm colors indicate larger gains from GC-Mem; the heatmap serves as the single-glance summary of experimental effectiveness reported in the artifact.}
\label{app:fig:heatmap_dgc}
\end{figure}

\begin{figure}[htbp]
\centering
\includegraphics[width=\linewidth,height=0.65\textheight,keepaspectratio]{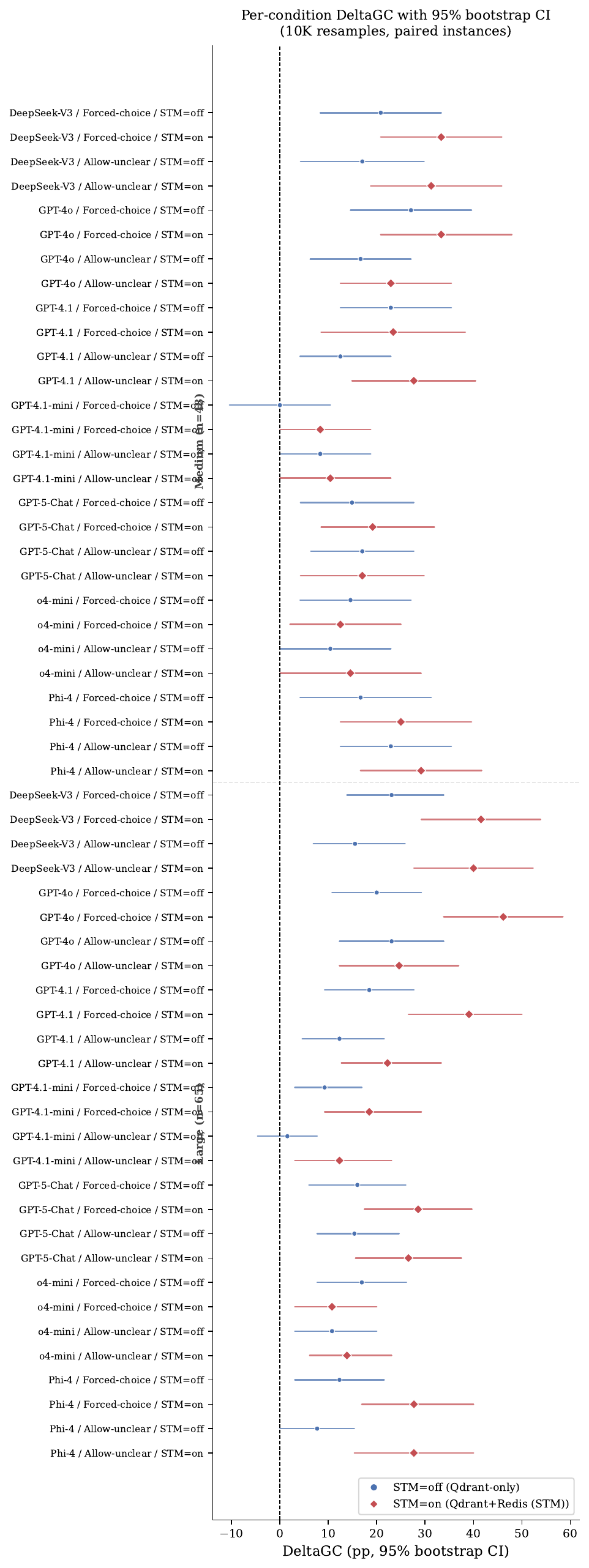}
\caption{Forest plot of paired bootstrap 95\% confidence intervals for $\Delta$GC (10{,}000 resamples per cell). Intervals entirely above zero support a consistent gain; intervals crossing zero flag weak $\kappa$ on that model or incomplete cells.}
\label{app:fig:forest_ci}
\end{figure}

\FloatBarrier
\begin{figure}[htbp]
\centering
\includegraphics[width=\linewidth]{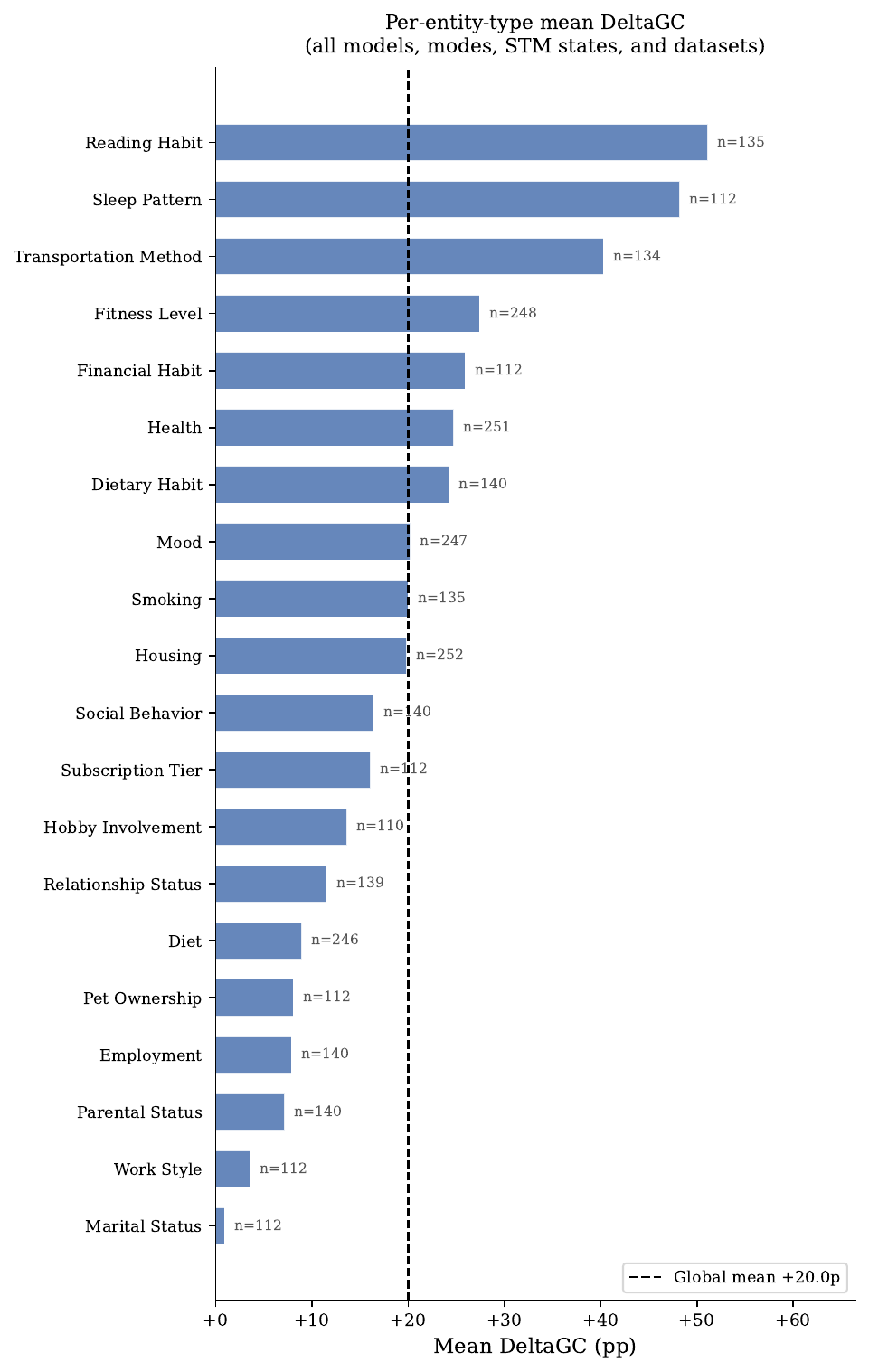}
\caption{$\Delta$GC broken down by synthetic entity type. Positive bars across categories indicate that GC-Mem is not driven by a single attribute family.}
\label{app:fig:per_entity_type}
\end{figure}

\subsection{Bag-of-Facts (No-Retrieval) Ablation}
\label{sec:appendix_bof}

\begin{table}[tb]
\centering\small

\begin{tabular}{lrrrrrrr}
\toprule
Dataset & $n$ & Mean Std & $\sigma$ & Mean TS & Mean GC & Mean $\Delta$GC & $\sigma$($\Delta$GC) \\
\midrule
Medium & 48 & 20.4\% & $\pm$10.7 & 18.5\% & \textbf{55.2\%} & +34.8p & $\pm$12.2 \\
Large & 65 & 12.1\% & $\pm$4.2 & 11.7\% & \textbf{36.5\%} & +24.4p & $\pm$11.6 \\
\bottomrule
\end{tabular}
\caption{Bag-of-facts (k=$\infty$) headline results: mean accuracy and $\Delta$GC across 7 models. Std = Standard RAG (all facts in context without GC), TS = Timestamp re-ranking, GC = GC-Mem. $\sigma$ is sample standard deviation across 7 models.}
\label{app:tab:bof_headline}
\end{table}

\begin{table}[tb]
\centering\small

\begin{tabular}{llrrrrrr}
\toprule
Model & Dataset & Std.\ RAG & Timestamp & GC-Mem & $\Delta$GC & Shadows & $n$ \\
\midrule
DeepSeek-V3 & Medium & 14.6\% & 12.5\% & \textbf{56.2\%} & +41.7p & 13.58 & 48 \\
 & Large & 9.2\% & 12.3\% & \textbf{52.3\%} & +43.1p & 12.98 & 65 \\
\midrule
GPT-4o & Medium & 16.7\% & 18.8\% & \textbf{60.4\%} & +43.8p & 16.67 & 48 \\
 & Large & 9.2\% & 6.2\% & \textbf{46.2\%} & +36.9p & 14.72 & 65 \\
\midrule
GPT-4.1 & Medium & 21.7\% & 15.2\% & \textbf{60.9\%} & +39.1p & 13.02 & 48 \\
 & Large & 14.1\% & 9.4\% & \textbf{31.2\%} & +17.2p & 10.47 & 65 \\
\midrule
GPT-4.1-mini & Medium & 14.6\% & 18.8\% & \textbf{37.5\%} & +22.9p & 7.42 & 48 \\
 & Large & 7.7\% & 7.7\% & \textbf{24.6\%} & +16.9p & 5.14 & 65 \\
\midrule
GPT-5-Chat & Medium & 39.1\% & 21.7\% & \textbf{71.7\%} & +32.6p & 13.39 & 48 \\
 & Large & 17.5\% & 11.1\% & \textbf{44.4\%} & +27.0p & 10.52 & 65 \\
\midrule
o4-mini & Medium & 29.2\% & 31.2\% & \textbf{43.8\%} & +14.6p & 7.25 & 48 \\
 & Large & 9.2\% & 15.4\% & \textbf{23.1\%} & +13.9p & 5.17 & 65 \\
\midrule
Phi-4 & Medium & 7.0\% & 11.6\% & \textbf{55.8\%} & +48.8p & 14.02 & 48 \\
 & Large & 17.6\% & 19.6\% & \textbf{33.3\%} & +15.7p & 13.45 & 65 \\
\bottomrule
\end{tabular}
\caption{Per-model bag-of-facts results across both dataset sizes. Std = Standard RAG accuracy (all facts provided); GC-Mem = accuracy after GC operator prunes outdated shadows (bold). $\Delta$GC = GC-Mem $-$ Std.~RAG in percentage points. Shadows = mean outdated chunks removed per instance by $\kappa$.}
\label{app:tab:bof_per_model}
\end{table}

\begin{table}[tb]
\centering\small

\begin{tabular}{rlrrrr}
\toprule
Rank & Model & Mean Std.\ RAG & Mean GC-Mem & Mean $\Delta$GC & Mean Shadows \\
\midrule
1 & DeepSeek-V3 & 11.9\% & \textbf{54.3\%} & +42.4p & 13.28 \\
2 & GPT-4o & 12.9\% & \textbf{53.3\%} & +40.3p & 15.70 \\
3 & Phi-4 & 12.3\% & \textbf{44.6\%} & +32.3p & 13.73 \\
4 & GPT-5-Chat & 28.3\% & \textbf{58.1\%} & +29.8p & 11.96 \\
5 & GPT-4.1 & 17.9\% & \textbf{46.1\%} & +28.2p & 11.75 \\
6 & GPT-4.1-mini & 11.1\% & \textbf{31.1\%} & +19.9p & 6.28 \\
7 & o4-mini & 19.2\% & \textbf{33.4\%} & +14.2p & 6.21 \\
\bottomrule
\end{tabular}
\caption{Models ranked by pooled mean $\Delta$GC (average of medium and large datasets). Mean Std.~RAG and Mean GC-Mem are averages of the two dataset sizes. Models with lower baseline accuracy (Std) tend to show larger GC-Mem gains, indicating that GC pruning is most effective when the unfiltered context is most misleading.}
\label{app:tab:bof_model_ranking}
\end{table}

\begin{table}[tb]
\centering\small

\begin{tabular}{lrrr}
\toprule
Model & Chat-RAG no-STM & Chat-RAG STM-on & Bag-of-facts (k=$\infty$) \\
\midrule
DeepSeek-V3 & +18.9p & +32.3p & \textbf{+41.7p} \\
GPT-5-Chat & +16.0p & +18.1p & \textbf{+32.6p} \\
Phi-4 & +19.8p & +27.1p & \textbf{+48.8p} \\
\bottomrule
\end{tabular}
\caption{$\Delta$GC (pp) on the medium dataset under three setups for the three models common to all experiments. Chat-RAG values are averaged over Forced-choice and Allow-unclear prompt modes. Bag-of-facts removes the Qdrant retrieval bottleneck (k=$\infty$, all facts in context) before GC-Mem is applied. The monotone increase Chat-RAG no-STM $<$ Chat-RAG STM-on $<$ Bag-of-facts confirms that \emph{GC-Mem advantage scales with retrieval quality}: when the context contains more outdated facts, GC pruning delivers a larger lift.}
\label{app:tab:bof_compare_chat}
\end{table}

\begin{figure}[H]
\centering
\includegraphics[width=\linewidth]{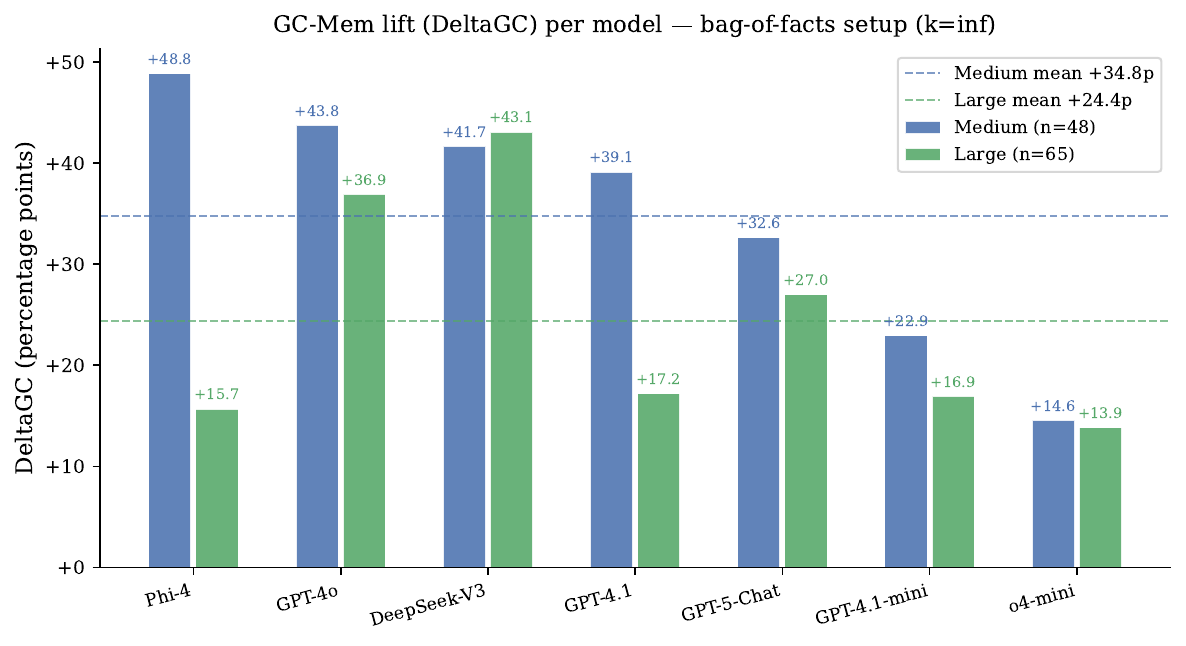}
\caption{Bag-of-facts (no retrieval) setting: $\Delta$GC by model when all old and new memories are placed directly in context ($k{\rightarrow}\infty$). This isolates the value of $\Phi_{\mathcal{T}}$ from retrieval noise.}
\label{fig:bof_per_model_dgc}
\end{figure}

\begin{figure}[H]
\centering
\includegraphics[width=\linewidth]{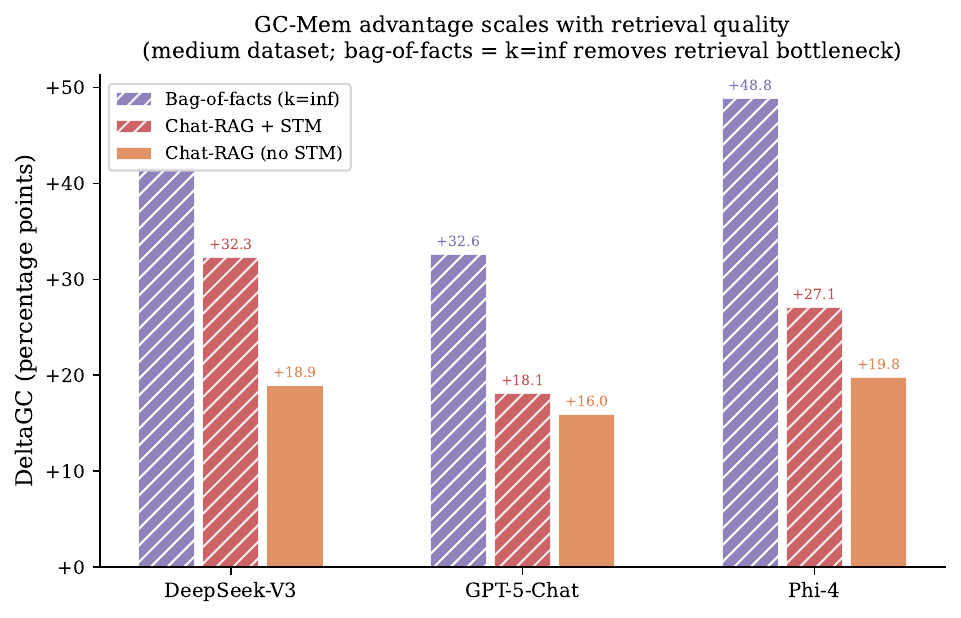}
\caption{Comparison of mean $\Delta$GC across chat-RAG (with and without STM) and bag-of-facts. Larger candidate sets with more conflicting statements yield larger GC-Mem lifts.}
\label{fig:bof_compare_setups}
\end{figure}

\begin{figure}[H]
\centering
\includegraphics[width=\linewidth]{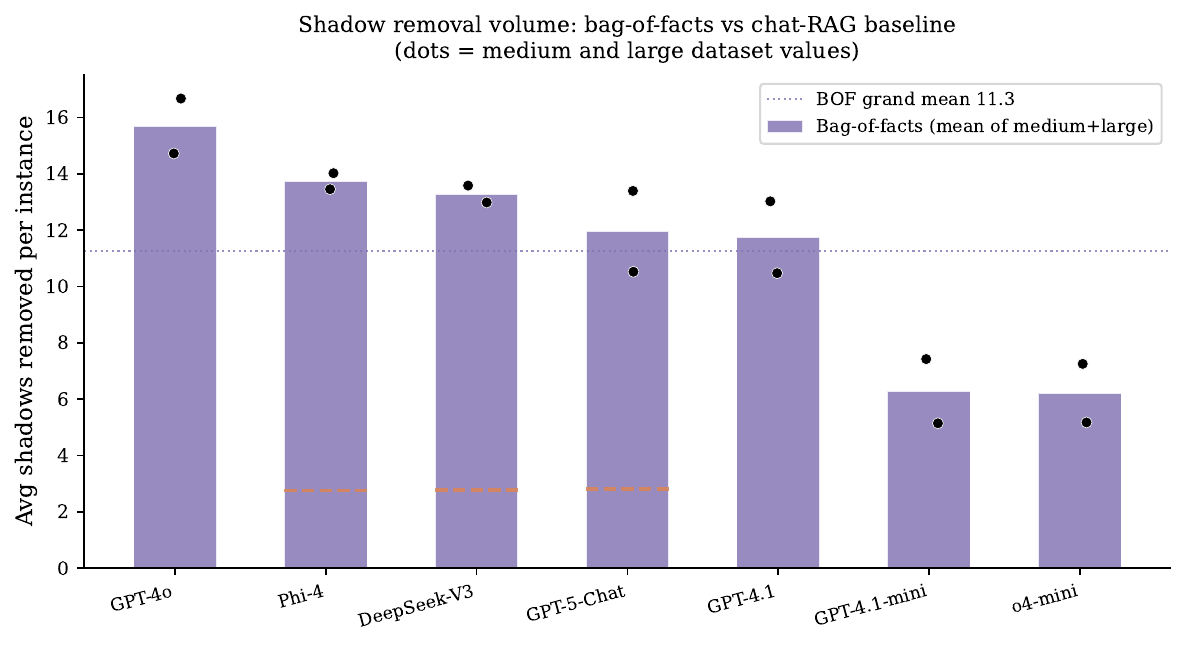}
\caption{Distribution of shadow counts identified by $\kappa$ in the bag-of-facts regime, by model.}
\label{fig:bof_shadows_distribution}
\end{figure}

\begin{figure}[H]
\centering
\includegraphics[width=\linewidth]{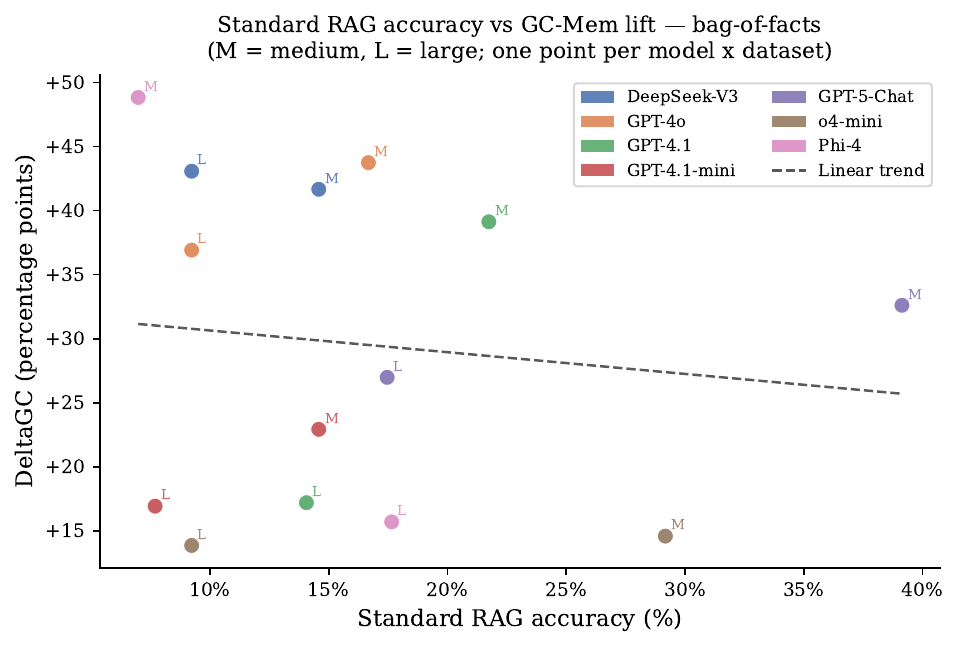}
\caption{Standard RAG accuracy vs.\ $\Delta$GC in bag-of-facts: weaker baselines often coincide with more headroom for conflict pruning.}
\label{fig:bof_std_vs_dgc}
\end{figure}

\clearpage

\section{Mathematical Proofs}

\subsection{Proof of Theorem 1: Asymptotic Recall Decay}
\begin{proof}
Let the total memory stream contain $N$ chunks asserting the stale state $S_{old}$ and $M$ chunks asserting the valid state $S_{new}$. Under the Accumulation Hypothesis (Assumption 3.3), $N \gg M$. 

We model the retrieval of top-$k$ chunks $\mathcal{R}_k$ as sampling without replacement from the population $N+M$. Let $X$ be the random variable denoting the number of valid chunks ($S_{new}$) successfully retrieved. If retrieval were purely uniform, $X$ would follow a hypergeometric distribution:
\begin{equation}
P(X = x) = \frac{\binom{M}{x}\binom{N}{k-x}}{\binom{N+M}{k}}
\end{equation}
The expected value is $\mathbb{E}[X] = k \frac{M}{N+M}$.

However, dense retrievers utilize cosine similarity over high-dimensional embeddings. Under Semantic Equivalence (Lemma~\ref{lem:equivalence}), $|sim(q, S_{old}) - sim(q, S_{new})| < \epsilon$. Because $S_{old}$ constitutes the vast majority of the embedded semantic space, dense retrieval exhibits a structural bias towards the denser centroid of $S_{old}$ phrasings. Therefore, the hypergeometric expected value serves as a strict theoretical upper bound.

Taking the limit as the interaction history scales:
\begin{equation}
\lim_{N \to \infty} \mathbb{E}[X] \approx \lim_{N \to \infty} k \frac{M}{N+M} = 0
\end{equation}
As $N \to \infty$, the probability of retrieving zero valid evidence ($P(X=0)$) approaches 1, proving Asymptotic Recall Decay.
\end{proof}

\subsection{Proof of Theorem 2: The Majority Vote Trap}
\begin{proof}
Assume the retriever successfully circumvents Theorem 1 and retrieves a mixed context $\mathcal{R}_k$ containing $1$ valid chunk ($S_{new}$) and $k-1$ stale chunks ($S_{old}$). 

During generation, the LLM utilizes scaled dot-product attention. Let $\alpha_i$ denote the attention weight assigned to chunk $c_i \in \mathcal{R}_k$. Under Lemma~\ref{lem:equivalence}, the semantic embeddings of $S_{old}$ and $S_{new}$ are equivalent up to $\epsilon < 0.10$. Consequently, the query vector $q$ yields roughly identical dot-products across all $k$ chunks. 

Applying the softmax function over these practically uniform logits results in a non-discriminating attention distribution:
\begin{equation}
\alpha_i \approx \frac{1}{k} \quad \forall c_i \in \mathcal{R}_k
\end{equation}

The final generation probability for a state $y$ is the sum of the attention mass corresponding to the chunks asserting that state. 
\begin{equation}
P(y = S_{old}) = \sum_{c_i \in S_{old}} \alpha_i \approx \frac{k-1}{k}
\end{equation}
\begin{equation}
P(y = S_{new}) = \sum_{c_j \in S_{new}} \alpha_j \approx \frac{1}{k}
\end{equation}

For any typical retrieval window ($k \ge 5$), $\frac{k-1}{k} \gg \frac{1}{k}$. The probability mass overwhelmingly favors the stale state. Thus, even when valid evidence is perfectly retrieved, generation collapses due to attention dilution, proving the Majority Vote Trap.
\end{proof}

\end{document}